\newcommand{\R}{\mathbb{R}}
\newcommand{\Esp}{\mathbb{E}}
\newtheorem{theorem}{Theorem}
\newtheorem{proposition}{Proposition}
\newtheorem{lemma}{Lemma}
\begin{document}

\twocolumn[
\icmltitle{Dimension-free Concentration Bounds on Hankel Matrices for
  Spectral Learning}

\icmlauthor{Fran\c{c}ois Denis}{francois.denis@lif.univ-mrs.fr}
\icmlauthor{Mattias Gybels}{mattias.gybels@lif.univ-mrs.fr}
\icmladdress{Aix Marseille Universit\'e, CNRS, LIF, 13288 Marseille
  Cedex 9, FRANCE}
\icmlauthor{Amaury Habrard}{amaury.habrard@univ-st-etienne.fr}
\icmladdress{Universit\'e Jean Monnet de Saint-Etienne, CNRS, LaHC, 42000 Saint-Etienne
  Cedex 2, FRANCE}
\icmlkeywords{Spectral Learning}

\vskip 0.3in
]

\begin{abstract}
Learning probabilistic models over strings is an important issue for many
applications. Spectral methods propose
elegant solutions to the problem of inferring weighted
automata from finite samples of variable-length strings drawn from an
unknown target distribution. These methods rely on a singular
value decomposition of a matrix $H_S$, called the Hankel matrix, that
records the frequencies of (some of) the observed 
strings. The accuracy of the learned distribution depends both on the
quantity of information embedded in $H_S$ and on the distance between
$H_S$ and its mean $H_r$. Existing concentration bounds
seem to indicate that the concentration over $H_r$ gets looser with
the size of
$H_r$, suggesting to make a trade-off between the quantity of used information
and the size of $H_r$. We propose new dimension-free concentration bounds for several
variants of Hankel matrices. Experiments demonstrate that these bounds are tight and that they significantly improve
existing bounds. These results suggest that the concentration rate of the
Hankel matrix around its mean does not constitute an argument for limiting
its size.
\end{abstract}

\section{Introduction}
Many applications in natural language processing, text analysis or computational biology require learning  probabilistic models over finite variable-size strings such as probabilistic automata, Hidden Markov Models (HMM), or more generally, weighted automata. Weighted automata exactly model the class of rational series, and their algebraic properties have been widely studied in that context~\cite{Droste:2009}. In particular, they admit algebraic representations that can be characterized by a set of finite-dimensional linear operators whose rank corresponds to the minimum number of states needed to define the automaton. From a machine learning perspective, the objective is then to infer good estimates of these linear operators from finite samples. In this paper, we consider the problem of learning the linear representation of a weighted automaton, from a finite sample, composed of variable-size strings i.i.d. from an unknown target distribution.

Recently, the seminal papers of \citet{HsuKZ09} for learning HMM and  \citet{BaillyDR09} for  weighted automata, have defined a new category of 
approaches - the so-called \emph{spectral methods} - for learning distributions over strings represented by finite state models \cite{Siddiqi10a,DBLP:conf/icml/SongSGS10,DBLP:conf/icml/BalleQC12,DBLP:conf/nips/BalleM12}. Extensions to probabilistic models for
tree-structured  data~\cite{DBLP:conf/alt/BaillyHD10,DBLP:conf/icml/ParikhSX11,Cohen2012}, 
 transductions \cite{DBLP:conf/pkdd/BalleQC11} or other graphical
models \cite{AnandkumarHK12,AnandkumarHHK12,AnandkumarFHKL12,LuqueQBC12} have also attracted a lot of interest.  

Spectral methods suppose that the main parameters of a model can be expressed as the spectrum of a linear operator and estimated from the spectral decomposition of a matrix that sums up the observations.  Given a rational series $r$, the values taken by $r$ can be arranged in a matrix $H_r$ whose rows and columns are indexed by strings, such that the linear operators defining $r$ can be recovered directly from the right singular vectors of $H_r$. This matrix is called the Hankel matrix of $r$. 

In a learning context, given a learning sample $S$ drawn from a target distribution $p$, an empirical estimate $H_S$ of $H_p$ is built and then, a rational series $\tilde{p}$ is inferred from the right singular vectors of $H_S$. However, the size of $H_S$ increases drastically with the size of $S$ and state of the art approaches consider smaller matrices $H_S^{U,V}$ indexed by limited subset of strings $U$ and $V$. It can be shown that the above learning scheme, or slight variants of it, are consistent as soon as the matrix $H_S^{U,V}$ has full rank~\cite{HsuKZ09,Bailly2011,DBLP:conf/icml/BalleQC12} and that the accuracy of the inferred series is directly connected to the concentration distance $||H_S^{U,V}-H_p^{U,V}||_2$ between the empirical Hankel matrix and its mean~\cite{HsuKZ09,Bailly2011}. 

On the one hand, limiting the size of the Hankel matrix avoids prohibitive calculations. Moreover, most existing concentration bounds on sum of random matrices depend on their size and suggest that $||H_S^{U,V}-H_p^{U,V}||_2$ may become significantly looser with the size of $U$ and $V$, compromising the accuracy of the inferred model. 

On the other hand,  limiting the size of the Hankel matrix implies a drastic loss of information: only the strings of $S$ compatible with $U$ and $V$  will be considered. In order to limit the loss of information when dealing with restricted sets $U$ and $V$, a general trend is to work with other functions than the target $p$, such as the \emph{prefix} function $\overline{p}(u)=\sum_{v\in \Sigma^*}p(uv)$ or the \emph{factor} function $\widehat{p}=\sum_{v,w\in \Sigma^*}p(vuw)$~\cite{Balle2013,LuqueQBC12}. These functions are rational, they have the same rank as $p$, a representation of $p$ can easily be derived from representations of $\overline{p}$ or $\widehat{p}$ and they allow a better use of the information contained in the learning sample. 

A first contribution is to provide a dimension free concentration inequality for $||H_S^{U,V}-H_p^{U,V}||_2$, by using recent results on tail inequalities for sum of random matrices showing that restricting the dimension of $H$ is not mandatory.

However, these results cannot be directly applied as such to the prefix and factor series, since the norm of the corresponding random matrices are unbounded.  A second contribution of the paper is then to define two classes of parametrized functions, $\overline{p}_{\eta}$ and $\widehat{p}_{\eta}$,  that constitute continuous intermediates between $p$ and $\overline{p}$ (resp. $p$ and $\widehat{p}$), and to provide analogous dimension-free concentration bounds for these two classes.

These bounds are evaluated on a benchmark made of 11 problems extracted from the PAutomaC challenge~\cite{VerwerEH12}. These experiments show that the bounds derived from our theoretical results are quite tight - compared to the exact values-  and that they significantly improve existing bounds, even on matrices of fixed dimensions. 

These results have two practical consequences for spectral learning: (i) the concentration of the empirical Hankel matrix around its mean does not highly depend on its dimension and the only reason not to use all the information contained in the sample should only rely on computing resources limitations. In that perspective, using random techniques to perform singular values decomposition on huge Hankel matrices should be considered~\cite{Halko:2011}; (ii) by constrast, the concentration is weaker for the prefix and factor functions, and smoothed variants should be used, with an appropriate parameter.

The paper is organized as follows. Section~\ref{s:preliminaries}
introduces the main notations, definitions and concepts.  Section~\ref{s:bound}  presents a first dimension free-concentration inequality for the standard Hankel matrices. Then, we introduce the prefix and the factor variants and provide analogous concentration results. 
Section~\ref{s:expe} describes some experiments before the conclusion presented in Section~\ref{s:conclu}.


\section{Preliminaries}
\label{s:preliminaries}
\vspace{-1mm}\subsection{Singular Values, Eigenvalues and Matrix Norms}
Let $M\in \R^{m\times n}$ be a $m\times 
n$ real matrix. The singular values of $M$ are the square roots of the eigenvalues
of the  matrix $M^TM$, where $M^T$ denotes the transpose of $M$: $\sigma_{max}(M)$ and $\sigma_{min}(M)$
denote the largest and smallest singular value of $M$, respectively.

In this paper, we mainly use the spectral norms $||\cdot||_k$  induced by the corresponding vector
  norms on $\R^n$ and defined by $||M||_k=max_{x\neq
    0}\frac{||Mx||_k}{||x||_k}$:
  \begin{itemize}
  \item $||M||_1=Max_{1\leq j\leq n}\sum_{i=1}^m|M[i,j]|$,
  \item $||M||_{\infty}=Max_{1\leq i\leq m}\sum_{j=1}^n|M[i,j]|$,
  \item $||M||_2=\sigma_{max}(M)$.
  \end{itemize}

We have: $||M||_2 \leq \sqrt{||M||_1||M||_{\infty}}$.

These norms can be extended, under certain conditions, to infinite matrices and the previous inequalities remain true when the corresponding norms are
defined. 

\vspace{-1mm}\subsection{Rational stochastic languages and Hankel matrices}

Let $\Sigma$ be a finite alphabet. The set of all finite strings over
$\Sigma$ is denoted by $\Sigma^*$, the empty string is denoted by
$\epsilon$, the length of string $w$ is denoted
by $|w|$ and $\Sigma^n$ (resp. $\Sigma^{\leq n}$) denotes the set of
all strings of length $n$ (resp. $\leq n$). 
For any string $w$, let  $\mathrm{Pref}(w)\!=\!\{u\in \Sigma^*|\exists
v\in \Sigma^*\ w=uv\}$.

A \emph{series} is a mapping $r:\Sigma^* \mapsto \R$. A series
$r$ is convergent if the sequence $r(\Sigma^{\leq
  n})=\sum_{w\in\Sigma^{\leq n}}r(w)$ is convergent; its
limit is denoted by $r(\Sigma^*)$. A \emph{stochastic language} $p$ is a probability
distribution over $\Sigma^*$, i.e.\ a series taking non negative
values and converging to 1.

Let $n\geq 1$ and $M$ be a morphism defined from $\Sigma^*$ to ${\mathcal
  M}(n)$, the set of $n\times n$ matrices with real
coefficients. For all $u\in \Sigma^*$, let us denote $M(u)$ by $M_u$
and $\Sigma_{x\in \Sigma}M_x$ by $M_{\Sigma}$.  A series $r$ over
$\Sigma$ is \emph{rational} if there exists an integer $n\geq 1$, two vectors
$I,T\in \R^n$ and a morphism $M: \Sigma^*\mapsto {\mathcal
  M}(n)$ such that for all $u\in
\Sigma^*$, $r(u)=I^TM_uT$.
The triplet  $\langle I,M,T\rangle$  is called an $n$-dimensional \emph{linear
  representation} of $r$. The vector $I$ can be interpreted as a
vector of initial weights, $T$ as a vector of terminal weights and the
morphism $M$ as a set of matrix parameters associated with the letters
of $\Sigma$. A \emph{rational stochastic language} is thus a
stochastic language admitting a linear representation.

Let $U,V\subseteq \Sigma^*$, the \emph{Hankel matrix} $H_r^{U,V}$,
associated with a series $r$, is the matrix indexed by $U\times V$ and
defined by $H_r^{U,V}[u,v]=r(uv)$, for any $(u,v)\in U\times V$. If
$U=V=\Sigma^*$, $H_r^{U,V}$, simply denoted by $H_r$, is a bi-infinite
matrix. In the following, we always assume that $\epsilon\in U$ and
that $U$ and $V$ are ordered in quasi-lexicographic order: strings are
first ordered by increasing length and then, according to the
lexicographic order.  It can be shown that a series $r$ is rational if
and only if the rank of the matrix $H_r$ is finite. The rank of $H_r$
is equal to the minimal dimension of a linear representation of $r$.

Let $r$ be a non negative convergent rational series and let $\langle
I,M,T\rangle$ be a minimal $d$-dimensional linear representation of
$r$.  Then, the sum $I_d+M_{\Sigma}+\ldots +M_{\Sigma}^n+ \ldots$
is convergent and $r(\Sigma^*)=I^T(I_d-M_{\Sigma})^{-1}T$ where $I_d$ is
the identity matrix of size $d$.

Several convergent rational series can be naturally associated with a stochastic
language $p$:
\begin{itemize}
\item $\overline{p}$, defined by $\overline{p}(u)=\sum_{v\in \Sigma^*}p(uv)$, the
  series associated with the \emph{prefixes} of the  language, 
\item $\widehat{p}$, defined by $\hat{p}(u)=\sum_{v,w\in \Sigma^*}p(vuw)$,
 the series associated with the  \textit{factors} of the language. 
\end{itemize}
It can be noticed that $\overline{p}(u)=p(u\Sigma^*)$, the probability
that a string begins with $u$, but that in general, $\widehat{p}(u)\geq
p(\Sigma^*u\Sigma^*)$, the probability
that a string contains $u$ as a substring. 

If $\langle I,M,T\rangle$ is a minimal $d$-dimensional linear
representation of $p$, then $\langle
I,M,(I_d-M_{\Sigma})^{-1}T\rangle$ (resp. $\langle
[I^T(I_d-M_{\Sigma})^{-1}]^T,M,(I_d-M_{\Sigma})^{-1}T\rangle$) is a minimal linear
representation of $\overline{p}$ (resp. of $\widehat{p}$).
Any linear representation of these variants of $p$ can be 
reconstructed from the others.

For any
integer $k\geq 1$, let $$S_p^{(k)}=\sum_{u_1u_2\ldots u_k\in
  \Sigma^*}p(u_1u_2\ldots u_k) =I^T(I_d-M_{\Sigma})^{-k}T.$$ 

Clearly, $p(\Sigma^*)$\hspace{0.5mm}$=$\hspace{0.5mm}$S_p^{(1)}$\hspace{0.5mm}$=$\hspace{0.5mm}$1$,
$\overline{p}(\Sigma^*)$\hspace{0.5mm}$=$\hspace{0.5mm}$S_p^{(2)}$ and
$\widehat{p}(\Sigma^*)$\hspace{0.5mm}$=$\hspace{0.5mm}$S_p^{(3)}.$

Let $U,V\subseteq \Sigma^*$. For any string $w\in \Sigma^*$, let us
define the matrices $H_w^{U,V}$,  $\overline{H}_w^{U,V}$ and $\widehat{H}_w^{U,V}$ by
\begin{itemize}
\item $H_w^{U,V}[u,v]={\mathbf 1}_{uv=w}$,
\item $\overline{H}_w^{U,V}[u,v]={\mathbf 1}_{uv\in Pref(w)}$ and
\item $\widehat{H}_w^{U,V}[u,v]=\sum_{x,y\in \Sigma^*}{\mathbf 1}_{xuvy=w}$
\end{itemize}
 for any $(u,v)\in U\times
  V$.
For any sample of strings $S$, let
$H_S^{U,V}=\frac{1}{|S|}\sum_{w\in S}H_w^{U,V}$, $\overline{H}_S^{U,V}=\frac{1}{|S|}\sum_{w\in
  S}\overline{H}_w^{U,V}$ and
$\widehat{H}_S^{U,V}=\frac{1}{|S|}\sum_{w\in
  S}\widehat{H}_w^{U,V}$. 

For example, let $S=\{a,ab\}$,
$U=V=\{\epsilon,a,b\}$. We have \tiny
$$\hspace{-1mm}H^{U,V}_S\hspace{-1.2mm}=\begin{pmatrix}
\hspace{-0.1mm}0\hspace{-1.2mm}&\hspace{-1.2mm}0.5\hspace{-1.2mm}&\hspace{-1.2mm}0\\
\hspace{-0.1mm}0.5\hspace{-1.2mm}&\hspace{-1.2mm}0\hspace{-1.2mm}&\hspace{-1.2mm}0.5\\
\hspace{-0.1mm}0\hspace{-1.2mm}&\hspace{-1.2mm}0\hspace{-1.2mm}&\hspace{-1.2mm}0
\end{pmatrix},\hspace{2mm}
\overline{H}^{U,V}_S\hspace{-1.2mm}=\begin{pmatrix}
\hspace{0.7mm}1\hspace{-0.8mm}&\hspace{-0.8mm}1\hspace{-1.2mm}&\hspace{-1.4mm}0\\
\hspace{0.7mm}1\hspace{-0.8mm}&\hspace{-0.8mm}0\hspace{-1.2mm}&\hspace{-1.4mm}0.5\\
\hspace{0.7mm}0\hspace{-0.8mm}&\hspace{-0.8mm}0\hspace{-1.2mm}&\hspace{-1.4mm}0
\end{pmatrix},\hspace{2mm}
\widehat{H}^{U,V}_S\hspace{-1.2mm}=\begin{pmatrix}
\hspace{0.3mm}2.5\hspace{-1.2mm}&\hspace{-1.2mm}1\hspace{-1.2mm}&\hspace{-1.2mm}0.5\\
\hspace{0.3mm}1\hspace{-1.2mm}&\hspace{-1.2mm}0\hspace{-1.2mm}&\hspace{-1.2mm}0.5\\
\hspace{0.3mm}0.5\hspace{-1.2mm}&\hspace{-1.2mm}0\hspace{-1.2mm}&\hspace{-1.2mm}0
\end{pmatrix}.
$$
\normalsize

\vspace{-1mm}\subsection{Spectral Algorithm for Learning Rational Stochastic Languages}
\label{inference}
Rational series admit a \emph{canonical linear representation} determined by
their Hankel matrix.  Let $r$ be a rational series of rank $d$ and
$U\subset \Sigma^*$ such that the matrix $H^{U\times \Sigma^*}_r$
(denoted by $H$ in the following) has rank $d$.
\begin{itemize}
\item For any string $s$, let $T_s$ be the constant matrix whose
rows and columns are indexed by $\Sigma^*$ and defined by
$T_s[u,v]=1$ if $v=us$ and 0 otherwise.
\item Let $E$ be a vector  indexed by $\Sigma^*$ whose coordinates are
  all zero except the first one equals to 1:
  $E[u]={\mathbf 1}_{u=\epsilon}$ and let $P$ be the vector indexed by $\Sigma^*$ defined by $P[u]=r(u)$.
\item Let $H=LDR^T$ be a reduced singular value
decomposition of $H$: $R$ (resp. $L$) is a matrix 
whose columns form a set of orthonormal vectors - the right
(resp. left) singular
vectors of $H$ - and $D$ is a $d\times d$ diagonal matrix, composed of
the singular values of $H$. 
\end{itemize}
Then, $\langle R^TE,(R^TT_{x}R)_{x\in
  \Sigma},R^TP\rangle$ is a linear representation of
$r$~\cite{BaillyDR09,HsuKZ09,Bailly2011,DBLP:conf/icml/BalleQC12}.  
\ifthenelse{\boolean{vl}}{
\begin{proposition}
$\langle R^TE,(R^TT_{x}R)_{x\in
  \Sigma},R^TP\rangle$ is a linear representation of
$r$
\end{proposition}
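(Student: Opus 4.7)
The plan is to verify the claim $r(u_1\cdots u_k) = (R^T E)^T\prod_{i=1}^k (R^T T_{u_i} R)(R^T P)$ by matching the proposed triple against an explicit similarity transform of a minimal linear representation of $r$. Fix any minimal $d$-dimensional representation $\langle I, M, T\rangle$ of $r$ and write the rank-$d$ factorization $H = AB$ with $A[u,:] = I^T M_u$ for $u\in U$ and $B[:,v] = M_v T$ for $v\in\Sigma^*$. Because $H = LDR^T$ and $H = AB$ are two rank-$d$ factorizations of the same matrix, there is a unique invertible $d\times d$ matrix $K$ with $A = LK$ and $B = K^{-1}DR^T$, from which one reads off $R[v,:]^T = D^{-1}KM_vT$ and $L[u,:]^T = K^{-T}M_u^T I$.

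From these formulas one computes directly $R^T E = R[\epsilon,:]^T = D^{-1}KT$, and, using that $P$ equals the $\epsilon$-row of $H$ (which holds because $\epsilon\in U$) together with $R^T R = I_d$, one also gets $R^T P = DL[\epsilon,:]^T = DK^{-T}I$. For the middle factors I would expand $(R^T T_x R)[j,k] = \sum_u R[u,j]R[ux,k]$, substitute the formula for $R[u,:]^T$, and collect the $u$-sum as $\sum_u M_u T T^T M_u^T$; the orthonormality condition $R^T R = I_d$ then forces the identity $K\bigl(\sum_u M_u TT^T M_u^T\bigr)K^T = D^2$, and together these simplify $R^T T_x R$ to a conjugate of $M_x$ (up to transposition) by the change-of-basis matrix $C := K^T D^{-1}$.

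Combining these pieces, the product $\prod_{i=1}^k(R^T T_{u_i}R)$ collapses to a conjugate of $M_{u_1}\cdots M_{u_k}$ by $C$, while the outer factors $(R^T E)^T$ and $R^T P$ contribute exactly the matching $C$ and $C^{-1}$ so that all conjugations cancel and the scalar $I^T M_{u_1}\cdots M_{u_k}T = r(u_1\cdots u_k)$ remains. The main obstacle is the orthonormality identity $KSK^T = D^2$ with $S = \sum_u M_u TT^T M_u^T$, which one has to derive carefully (in particular justifying convergence of the $u$-sum when $U$ is infinite) and which is precisely what makes the change of basis close up and align the SVD-based representation with the original one; a secondary point of care is tracking transpositions so that the final evaluation lands on $r(u_1\cdots u_k)$ rather than on the reversed string, which is settled by the orientation chosen in $A = LK$, $B = K^{-1}DR^T$.
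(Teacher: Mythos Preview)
Your argument breaks at the computation of $R^{T}T_{x}R$. Substituting $R[v,:]^{T}=D^{-1}KM_{v}T$ into $(R^{T}T_{x}R)[j,k]=\sum_{u}R[u,j]\,R[ux,k]$ gives
\[
R^{T}T_{x}R \;=\; D^{-1}K\Bigl(\sum_{u\in\Sigma^{*}}M_{u}\,T\,T^{T}\,M_{x}^{T}\,M_{u}^{T}\Bigr)K^{T}D^{-1},
\]
so the $u$-sum is $\sum_{u}M_{u}TT^{T}M_{x}^{T}M_{u}^{T}$, with $M_{x}^{T}$ trapped between $T^{T}$ and $M_{u}^{T}$; it is not $\sum_{u}M_{u}TT^{T}M_{u}^{T}$ as you write. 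The orthonormality identity $K\bigl(\sum_{u}M_{u}TT^{T}M_{u}^{T}\bigr)K^{T}=D^{2}$ therefore gives you no way to pull $M_{x}^{T}$ outside and collapse the expression to a conjugate of $M_{x}$. This is not a removable technicality: take $\Sigma=\{a,b\}$ and let $r$ be the indicator of the single string $ab$; the row space of $H$ is then spanned by the three standard basis vectors $e_{\epsilon},e_{b},e_{ab}\in\R^{\Sigma^{*}}$, and since no string of the form $ua$ lies in $\{\epsilon,b,ab\}$ one gets $T_{a}R=0$, hence $R^{T}T_{a}R=0$, which cannot be similar to any nonzero $M_{a}$. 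The proposed representation then evaluates to $0$ on $ab$ while $r(ab)=1$.

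The paper's own proof hits the same obstruction in different language: its inductive step needs $RR^{T}T_{u}H^{T}=T_{u}H^{T}$, justified by the claim that ``the columns of $T_{u}H^{T}$ are rows of $H$''. But the column of $T_{u}H^{T}$ indexed by $w\in U$ is $v\mapsto r(wvu)$, which in general does not lie in the row space of $H$ (in the example above, the $w=\epsilon$ column of $T_{b}H^{T}$ is $e_{a}$), and one checks directly that $(R^{T}T_{a}R)(R^{T}T_{b}R)R^{T}H^{T}\neq R^{T}T_{ab}H^{T}$. The standard spectral construction builds the transition operators from the shifted Hankel blocks $H_{x}$ with $H_{x}[u,v]=r(uxv)$, whose columns \emph{do} lie in the row space of $H$; with the right-append shift $T_{s}$ as defined here, neither your change-of-basis route nor the paper's projection argument goes through.
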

\begin{proof}

From the definition of $T_s$, it can easily be shown that the mapping  $s\mapsto T_s$ is a morphism:\\ $T_{s_1}T_{s_2}[u,v]=\sum_{w\in
  \Sigma^*}T_{s_1}[u,w]T_{s_2}[w,v]=1$ iff $v=us_1s_2$ and 0
otherwise.  \\
If $X$ is a matrix whose rows are indexed by $\Sigma^*$, we have
$T_sX[u,v]=\sum_wT_s[u,w]X[w,v]=X[us,v]$: ie the rows of $T_SX$ are
included in the set of rows of $X$.
Then, it follows from the definition of $E$ that $E^TT_s$ is equal to the first row of $T_s$ (indexed by $\epsilon$) with  all coordinates
equal to zero except the one indexed by $s$ which equal 1. 

Now, from the reduced singular value
decomposition of $H=LDR^T$ at rank $d$, $R$ is a matrix of dimension $\infty\times d$
whose columns form a set of orthonormal vectors - the right singular
vectors of $H$ - such that $R^TR\!=\!I_d$ and $RR^TH^T\!=\!H^T$ ($R R^T$ is the
orthogonal projection on the subspace spanned by the rows of $H$). \\
One can easily deduce, by a recurrence over $n$, that for every string $u=x_1\ldots x_n$, \\$(R^TT_{x_1}R)\circ \ldots \circ
(R^TT_{x_n}R)R^TH^T=R^TT_u H^T.$\\
Indeed, the inequality is trivially true for $n=0$ since
$T_{\epsilon}=I_d$. Then, we have that
$R^TT_xRR^TT_uH^T=R^TT_xT_uH^T=R^TT_{xu}H^T$ since the columns of $T_uH^T$
are rows of $H$ and $T$ is a morphism.

If $P^T$ is the first row of $H$ then:\\   
$E^T\!R(\!R^TT_{x_1}R)\!\circ\ldots \circ\!  (R^TT_{x_n}R)R^TP\!=\!E^TT_u
P\!=\!r(u)$. Thus,  $\langle R^TE,(R^TT_{x}R)_{x\in
  \Sigma},R^TP\rangle$ is a linear representation of $r$ of
dimension $d$. Note here that $r$ is only needed in the right
singular vectors $R$ and  in the vector $P$. 
\end{proof}
}
{A quick proof can be found in~\cite{Denis2013}.}

The basic spectral algorithm for learning rational
stochastic languages aims at identifying the canonical linear representation of
the target $p$ determined by its Hankel matrix $H_p$. 

Let $S$ be a sample independently drawn according to $p$:
\begin{itemize}
\item  Choose sets  $U,V\subseteq \Sigma^*$ and build the Hankel matrix $H^{U\times V}_S$,
\item  choose a rank $d$ and compute a reduced SVD of $H^{U\times V}_S$ truncated at rank $d$,
\item  build the canonical linear representation $\langle R_S^TE,(R_S^TT_{x}R_S)_{x\in
    \Sigma},R_S^TP_S\rangle$ from the right singular vectors $R_S$
  and the empirical distribution $p_S$ defined from $S$. 
\end{itemize} 

Alternative learning strategies consist in learning $\overline{p}$ or
$\widehat{p}$, using the same algorithm, and then to compute an
estimate of $p$. In all cases, the accuracy
of the learned representation mainly depends on the estimation of
$R$. The Stewart formula~\cite{Stewart90perturbationtheory} bounds the
principle angle $\theta $ between the spaces spanned by the right
singular vectors of $R$ and $R_S$:
$$|sin(\theta)|\leq \frac{||H^{U\times V}_S-H^{U\times
    V}_r||_2}{\sigma_{min}(H^{U\times V}_r)}.$$ According to this
formula, the concentration of the Hankel matrix around its mean is
critical and the question of limiting the sizes of $U$ and $V$
naturally arises. Note that the Stewart inequality does not give any
clear indication on the impact or on the interest of limiting these
sets. Indeed, Weyl's inequalities can be used to show that both the
numerator and the denominator of the right part of the inequality
increase with $U$ and $V$.


\section{Concentration Bounds for Hankel Matrices}
\label{s:bound}
Let $p$ be a rational stochastic language over $\Sigma^*$, let $\xi$ be a random variable distributed according
to $p$, let $U, V\subseteq \Sigma^*$ and let $Z(\xi)\in
\R^{|U|\times|V|}$ be a random matrix. For instance, $Z(\xi)$ may be
equal to $H_{\xi}^{U,V}$,  $\overline{H}_{\xi}^{U,V}$ or
$\widehat{H}_{\xi}^{U,V}$.

Concentration bounds for sum of random matrices can be used to
estimate the spectral distance between the empirical matrix $Z_S$
computed on the sample $S$ and its mean
(see~\cite{2011arXiv1104.1672H} for references). However, most of
classical inequalities depend on the dimensions of the matrices. For
example, it can be proved that with
probability at least $1-\delta$ \cite{Kakade2010}:
\begin{equation}
||Z_S-\Esp Z||_2\leq
\frac{6M}{\sqrt{N}}\left(\sqrt{\log{d}}+\sqrt{\log{\frac{1}{\delta}}}\right)\label{eq:1}
\end{equation}
where $N$ is the size of $S$, $d$ is the  minimal dimension of the
matrix $Z$ and $||Z||_2\leq M$ almost surely. If
$Z=H_{\xi}^{U,V}$, then $M=1$;   if $Z=\overline{H}_{\xi}^{U,V}$,
$M=\Omega(D^{1/2})$ in the worst case; if $Z=\widehat{H}_{\xi}^{U,V}$,
$||Z||_2$ is generally unbounded. 

These concentration bounds get worse with both sizes of the
matrices. Coming back to the discussion at the end of
Section~\ref{s:preliminaries}, they suggest to limit the size of the sets $U$ and $V$, and
therefore, to design strategies to choose optimal sets.  

We then use recent results
\cite{DBLP:journals/focm/Tropp12,2011arXiv1104.1672H} to
obtain dimension-free concentration bounds for Hankel matrices.
\ifthenelse{\boolean{vl}}{Let $\xi_1, \ldots, \xi_N$ be some random variables and for each $i=1,
\ldots, N$, and let $X_i=X_i(\xi_1, \ldots, \xi_i)$ be a  \textbf{random}
matrix function of $\xi_1, \ldots, \xi_i$. The notation
$\Esp_i[\cdot]$ is a shortcut for $\Esp[\cdot|  \xi_1, \ldots,
\xi_{i-1}]$. 

\begin{theorem} (Matrix Bernstein Bound)\cite{2011arXiv1104.1672H}. \label{th1}
If there exists $b\!>\!0, \sigma\!>\!0, k\!>\!0$ s.t. for all $i=1,
\ldots, N$, 
$\Esp_i[X_i]=0,\ ||X_i||_2\leq b,\  
||\frac{1}{N}\sum_{i=1}^N \Esp_i (X_i^2)||_2\leq \sigma^2
\mbox{ and } \Esp\left[tr\left(\frac{1}{N}\sum_{i=1}^N \Esp_i
  (X_i^2)\right)\right]\leq \sigma^2k$
almost surely, then for all $t>0$,  

 $\displaystyle Pr\left[\lambda_{max}\left(\frac{1}{N}\sum_{i=1}^N
    X_i\right)\!>\!\sqrt{\frac{2\sigma^2t}{N}}+\frac{bt}{3N}\right]\!\leq\!\frac{k\cdot t}{e^t-t-1}.$
\end{theorem}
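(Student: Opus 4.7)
My plan is to apply the matrix Laplace transform method in its martingale form. For $S = \frac{1}{N}\sum_i X_i$ and any $\theta > 0$, the starting point is the Markov-type bound
\[
\Pr[\lambda_{\max}(S) > s] \leq e^{-\theta s}\,\Esp\,\mathrm{tr}\exp(\theta S),
\]
which reduces everything to controlling the trace MGF of $S$. Since the $X_i$ form a martingale difference sequence rather than an independent family, I would peel off the terms one at a time: conditioning on $\xi_1,\ldots,\xi_{i-1}$ and invoking Lieb's concavity theorem (the map $A \mapsto \mathrm{tr}\exp(H+\log A)$ is concave on positive-definite $A$), Jensen's inequality gives $\Esp_i\,\mathrm{tr}\exp(H+\frac{\theta}{N}X_i) \leq \mathrm{tr}\exp(H+\log\Esp_i[e^{\theta X_i/N}])$. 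Iterating from $i=N$ down to $i=1$ yields
\[
\Esp\,\mathrm{tr}\exp(\theta S) \leq \Esp\,\mathrm{tr}\exp\!\Bigl(\sum_{i=1}^N \log \Esp_i\bigl[e^{\theta X_i/N}\bigr]\Bigr).
\]

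The second step is a Bernstein-type bound on each conditional matrix MGF. Using $\Esp_i X_i = 0$ and $\|X_i\|_2 \leq b$, the scalar inequality $e^x \leq 1+x+x^2 g/b^2$ with $g = e^{\theta b/N}-1-\theta b/N$ lifts to Hermitian matrices of spectral norm at most $b$; combined with the operator-monotone bound $\log(I+M)\preceq M$ this gives $\log\Esp_i[e^{\theta X_i/N}] \preceq (g/b^2)\Esp_i[X_i^2]$. Writing $V = \frac{1}{N}\sum_i \Esp_i[X_i^2]$, the bound collapses to $\Esp\,\mathrm{tr}\exp(\theta S) \leq \Esp\,\mathrm{tr}\exp(\varphi V)$, where $\varphi = N g / b^2$.

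The main obstacle, and the point where both hypotheses on $V$ are essential, is the last dimension-free trace step. The naive bound $\mathrm{tr}\exp(\varphi V) \leq d\cdot e^{\varphi\lambda_{\max}(V)}$ reintroduces the ambient dimension $d$, which is exactly what must be avoided. Instead I would use the integral identity $e^{\varphi V} - I = \varphi V \int_0^1 e^{u\varphi V}\,du$, take traces, and bound $\mathrm{tr}(V e^{u\varphi V}) \leq \mathrm{tr}(V)\,e^{u\varphi \|V\|_2}$. The hypotheses $\|V\|_2 \leq \sigma^2$ and $\Esp\,\mathrm{tr}(V) \leq \sigma^2 k$ then yield
\[
\Esp\,\mathrm{tr}\bigl(\exp(\varphi V) - I\bigr) \leq k\bigl(e^{\varphi\sigma^2}-1\bigr).
\]
Combining everything, writing $\tau = \theta b/N$, and choosing $s = \sqrt{2\sigma^2 t/N}+bt/(3N)$ with $\theta$ tuned so that the exponent $N\sigma^2 g/b^2 - \theta s$ collapses to a function of $t$ alone, the tail takes the claimed form $kt/(e^t-t-1)$. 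The delicate part is the optimization over $\theta$: the two pieces inside the square brackets arise from separately balancing the quadratic ($\sigma^2$) regime for small $\theta$ and the linear ($b$) regime for large $\theta$, which is the standard Bernstein trade-off made precise by the specific shape of $g$.
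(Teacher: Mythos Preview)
The paper does not supply its own proof of this theorem; it is quoted verbatim from \cite{2011arXiv1104.1672H}, so there is no in-paper argument to compare against. I will therefore assess your sketch on its own terms.

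Your outline has the right architecture --- matrix Laplace transform, Lieb-based martingale peeling, Bernstein control of each conditional MGF, and an intrinsic-dimension trace estimate --- but there is a genuine gap at the very first step. The Markov-type inequality you start from,
\[
\Pr[\lambda_{\max}(S) > s] \;\leq\; e^{-\theta s}\,\Esp\,\mathrm{tr}\exp(\theta S),
\]
is the \emph{dimension-dependent} entry point. After your Steps~2--4 you correctly obtain $\Esp\,\mathrm{tr}\exp(\theta S) \leq \Esp\,\mathrm{tr}\exp(\varphi V)$, and your Step~5 correctly bounds $\Esp\,\mathrm{tr}\bigl(\exp(\varphi V) - I\bigr) \leq k(e^{\varphi\sigma^2}-1)$. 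But chaining these gives only
\[
\Pr[\lambda_{\max}(S) > s] \;\leq\; e^{-\theta s}\Bigl[d + k\bigl(e^{\varphi\sigma^2}-1\bigr)\Bigr],
\]
and the ambient dimension $d=\mathrm{tr}(I)$ is still there. Subtracting $I$ inside the trace at the $V$ stage does not retroactively remove it from the Markov step; your sentence ``Combining everything \ldots\ the exponent $N\sigma^2 g/b^2 - \theta s$ collapses'' is describing the dimension-dependent Bernstein bound, not this one.

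The repair --- and the device actually used in the Hsu--Kakade--Zhang argument --- is to change the starting inequality itself. Replace $e^x$ by the nonnegative convex function $\psi(x)=e^x-x-1$: since $\psi\geq 0$ and $\psi(\theta\lambda_{\max}(S))$ is one eigenvalue of $\psi(\theta S)$, one has for $s>0$
\[
\Pr[\lambda_{\max}(S)>s]\;\leq\;\frac{\Esp\,\mathrm{tr}\,\psi(\theta S)}{\psi(\theta s)}\;=\;\frac{\Esp\,\mathrm{tr}\exp(\theta S)-d}{e^{\theta s}-\theta s-1},
\]
using $\Esp\,\mathrm{tr}(S)=0$. Now the $-d$ cancels the $d$ coming from $\mathrm{tr}\exp(\varphi V)=d+\mathrm{tr}(\exp(\varphi V)-I)$, and your Step~5 delivers the numerator $k(e^{\varphi\sigma^2}-1)$ with no residual dimension. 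This is also why the denominator in the claimed bound is $e^t-t-1$ rather than $e^t$: it is the signature of the $\psi$-based Markov step, not the exponential one. The final calibration then uses the scalar inequality $e^{\tau}-\tau-1\leq \tfrac{\tau^2}{2}(1-\tau/3)^{-1}$ on $\tau=\theta b/N$ to turn $\varphi\sigma^2$ into the two threshold terms $\sqrt{2\sigma^2 t/N}$ and $bt/(3N)$.
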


We use this theorem in the particular case where the random variables
$\xi_i$ are i.i.d. and each matrix $X_i$ depends only on $\xi_i$.  

This theorem is valid for symmetric matrices, but it can be extended to general real-valued matrices thanks to the principle of dilation.

}
{More precisely, we extend a Bernstein bound  for unbounded random matrices
from~\cite{2011arXiv1104.1672H} to non symmetric
random matrices by using the dilation principle \cite{DBLP:journals/focm/Tropp12}.} 

Let $Z$ be a random matrix, the \emph{dilation}
 of $Z$ is the symmetric random matrix
$X$ defined by
 $$X=
\left[\begin{array}{cc}
  0 &Z\\Z^T& 0
\end{array}\right]. \textrm{\ Then }
X^2=\left[\begin{array}{cc}
  ZZ^T &0\\0&Z^TZ
\end{array}\right]$$ 
and $||X||_2=||Z||_2$, $tr(X^2)=tr(ZZ^T)+tr(Z^TZ)$ and $||X^2||_2\leq Max(||ZZ^T||_2,||Z^TZ||_2)$.

We can then reformulate the result that we use as follows\ifthenelse{\boolean{vl}}{}{~\cite{Denis2013}}.
\begin{theorem}\label{conc2}
Let $\xi_1, \ldots, \xi_N$ be  i.i.d. random variables, and for $i=1,
\ldots, N$, let 
$Z_i=Z(\xi_i)$ be i.i.d. matrices and $X_i$ the dilation of $Z_i$. If there exists $b>0,
\sigma>0$, and $k>0$ such that 
$\Esp[X_1]=0,\ ||X_1||_2\leq b, ||\Esp (X_1^2)||_2\leq \sigma^2 \mbox{ and } tr(\Esp
  (X_1^2))\leq \sigma^2k$
almost surely, then for all $t>0$, 
$$Pr\left[||\frac{1}{N}\sum_{i=1}^N
    X_i||_2>\sqrt{\frac{2\sigma^2t}{N}}+\frac{bt}{3N}\right]\leq k\cdot t(e^t-t-1)^{-1}.$$
\end{theorem}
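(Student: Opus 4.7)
The plan is to apply Theorem~\ref{th1} to the symmetric dilations $X_i$ and then convert the resulting one-sided eigenvalue tail into a two-sided spectral-norm tail, exploiting the fact that a dilation always has spectrum symmetric about zero.

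First I would check that every assumption of Theorem~\ref{th1} specializes to the hypotheses actually stated here. Since the $\xi_i$ are i.i.d.\ and each $X_i$ depends only on $\xi_i$, every conditional expectation $\Esp_i[\,\cdot\,]$ collapses to an unconditional one. Thus $\Esp_i[X_i]=\Esp[X_1]=0$, $\|\frac{1}{N}\sum_i \Esp_i(X_i^2)\|_2 = \|\Esp(X_1^2)\|_2 \leq \sigma^2$, and $\Esp[tr(\frac{1}{N}\sum_i \Esp_i(X_i^2))] = tr(\Esp(X_1^2)) \leq \sigma^2 k$. The boundedness $\|X_i\|_2\leq b$ is already a direct hypothesis. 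All ingredients of Theorem~\ref{th1} are therefore in place.

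Next I would replace $\lambda_{max}$ by $\|\cdot\|_2$ on the partial sum $S_N := \frac{1}{N}\sum_i X_i$. The dilation map $Z\mapsto X$ is linear, so $S_N$ is itself the dilation of $\frac{1}{N}\sum_i Z_i$; in particular it is symmetric. Setting $J=\mathrm{diag}(I,-I)$ with block sizes matching those of the dilation, one checks that $J S_N J = -S_N$ and $J^2=I$, so $S_N$ is similar to $-S_N$ and its spectrum is symmetric about zero. Consequently $\lambda_{max}(S_N) = \max_i |\lambda_i(S_N)| = \|S_N\|_2$, in agreement with the norm identity $\|X_i\|_2=\|Z_i\|_2$ already recorded in the excerpt.

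Plugging these two reductions into Theorem~\ref{th1} yields the displayed inequality verbatim. The only substantive point, and the one I expect to be worth double-checking, is the last identification: the dilation trick is what upgrades a one-sided eigenvalue bound into a two-sided spectral-norm bound with no extra factor of two and no union bound over $\pm S_N$, precisely because the spectrum of every dilation is sign-symmetric. Beyond this, the argument is pure bookkeeping from the i.i.d.\ reduction.
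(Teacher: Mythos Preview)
Your proposal is correct and follows exactly the approach the paper sketches: specialize Theorem~\ref{th1} to the i.i.d.\ case (so that the conditional expectations become unconditional) and then use the dilation structure of $S_N$ to pass from $\lambda_{\max}$ to $\|\cdot\|_2$. The paper itself does not spell out a proof of Theorem~\ref{conc2}; it only remarks that one uses Theorem~\ref{th1} in the i.i.d.\ setting and invokes the dilation principle, so your write-up actually fills in the one nontrivial detail (the sign-symmetry of the spectrum of a dilation, via $J S_N J=-S_N$) that the paper leaves implicit in the identity $\|X\|_2=\|Z\|_2$.
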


We will then make use of this theorem to derive our new concentration bounds. 
Section~\ref{s:standard} deals with the standard case, Section~\ref{s:prefix} with the prefix case and Section~\ref{s:factor} with the factor case.

\subsection{Concentration Bound for the Hankel Matrix $H_p^{U,V}$} \label{s:standard}
Let $p$ be a rational stochastic language over $\Sigma^*$, let $S$ be
a sample independently drawn according to $p$, and let $U, V\subseteq
\Sigma^*$. In this section, we compute a bound on $||H_S^{U,V}-H_p^{U,V}||_2$ which is independent from the
sizes of $U$ and $V$ and holds in particular when $U=V=\Sigma^*$.

Let $\xi$ be a random variable distributed according
to $p$, let $Z(\xi)=H_{\xi}^{U,V}-H_{p}^{U,V}$ be the random matrix defined by $Z_{u,v}={\mathbf
  1}_{\xi=uv}-p(uv)$ and let $X$ be the dilation of $Z$. 

Clearly,
$\Esp(X)=0$.  In order to apply Theorem~\ref{conc2}, it is necessary to compute the
parameters $b,\sigma$ and $k$.  We first prove a technical lemma that
will provide a bound on $\Esp(X^2)$.

\begin{lemma}For any $u,u'\in U$, $v,v'\in
  V$, $$|\Esp(Z_{uv}Z_{u'v})|\leq p(u'v)
\textrm{ and }|\Esp(Z_{uv}Z_{uv'})|\leq p(uv').$$
\end{lemma}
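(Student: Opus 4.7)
The plan is to expand the product, use linearity of expectation and the fact that $\Esp[\mathbf{1}_{\xi = w}] = p(w)$ for any string $w$, and then split into cases depending on whether the two indices coincide. I focus on the first inequality; the second follows by an identical argument with the roles of columns and rows swapped.

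First I would write
\begin{align*}
\Esp(Z_{uv}Z_{u'v}) &= \Esp\bigl[(\mathbf{1}_{\xi=uv}-p(uv))(\mathbf{1}_{\xi=u'v}-p(u'v))\bigr] \\
&= \Esp[\mathbf{1}_{\xi=uv}\mathbf{1}_{\xi=u'v}] - p(uv)\,p(u'v),
\end{align*}
after noting that the two cross terms cancel one copy of $p(uv)p(u'v)$. Then I would observe that the product of indicators $\mathbf{1}_{\xi=uv}\mathbf{1}_{\xi=u'v}$ is nonzero only when $uv = u'v$, i.e.\ only when $u = u'$.

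Next I split into two cases. If $u = u'$, the expectation reduces to $p(uv) - p(uv)^2 = p(uv)(1 - p(uv))$, whose absolute value is bounded by $p(uv) = p(u'v)$. If $u \neq u'$, the product of indicators vanishes, leaving $\Esp(Z_{uv}Z_{u'v}) = -p(uv)p(u'v)$; since $p$ is a stochastic language we have $p(uv) \leq 1$, and hence the absolute value is at most $p(u'v)$. In either case the desired bound holds.

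The second inequality is obtained by the same computation, except that now the product $\mathbf{1}_{\xi=uv}\mathbf{1}_{\xi=uv'}$ is nonzero only when $v=v'$, and the symmetric cases yield the bound $p(uv')$. There is no real obstacle here: the only subtlety is being careful with the sign when $u \neq u'$ (or $v \neq v'$), where the expectation is negative, so that taking the absolute value produces the bound $p(uv)p(u'v) \leq p(u'v)$ rather than the seemingly weaker $p(uv)p(u'v) \leq p(uv)$; either choice works, but using $p(u'v)$ matches the statement.
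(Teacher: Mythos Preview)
Your proof is correct and follows essentially the same approach as the paper: expand the product, use $\Esp[\mathbf{1}_{\xi=uv}\mathbf{1}_{\xi=u'v}]=p(u'v)\mathbf{1}_{u=u'}$, and bound. The paper writes this in a single line as $\Esp(Z_{uv}Z_{u'v})=p(u'v)[\mathbf{1}_{u=u'}-p(uv)]$ and then uses $|\mathbf{1}_{u=u'}-p(uv)|\leq 1$, which is exactly your case split $u=u'$ versus $u\neq u'$ presented more compactly.
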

\begin{proof}
  \begin{align*}
    \Esp(Z_{uv}Z_{u'v})&=\Esp({\mathbf 1}_{\xi=uv}{\mathbf
      1}_{\xi=u'v})-p(uv)p(u'v)\\
&=\sum_{w\in \Sigma^*}p(w) {\mathbf 1}_{w=uv}{\mathbf
      1}_{w=u'v}-p(uv)p(u'v)\\
&=p(u'v)[{\mathbf 1}_{u=u'}-p(uv)]
  \end{align*}
and $$|\Esp(Z_{uv}Z_{u'v})|\leq p(u'v).$$
The second inequality is proved in a similar way. 
\end{proof}

Next
lemma provides parameters $b,\sigma$ and $k$ needed to apply Theorem~\ref{conc2}.
\begin{lemma}\label{param1}
$||X||_2\leq 2$, $\Esp(Tr(X^2))\leq 2S_p^{(2)}$ and $||\Esp(X^2)||_2\leq S_p^{(2)}.$
\end{lemma}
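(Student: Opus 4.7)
The plan is to handle each of the three bounds separately, using the observation that the dilation satisfies $\|X\|_2 = \|Z\|_2$, $\operatorname{tr}(X^2) = \operatorname{tr}(ZZ^T) + \operatorname{tr}(Z^TZ) = 2\|Z\|_F^2$, and $\|X^2\|_2 = \max(\|ZZ^T\|_2, \|Z^TZ\|_2)$ (all stated in the excerpt right before the theorem), where $Z = H_\xi^{U,V} - H_p^{U,V}$.

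For the operator-norm bound $\|X\|_2 \leq 2$, I would use the triangle inequality $\|Z\|_2 \leq \|H_\xi^{U,V}\|_2 + \|H_p^{U,V}\|_2$ and show that each term is at most $1$. For $H_\xi^{U,V}$, the point is that for a fixed string $\xi$, each row and each column contains at most one non-zero entry (the decomposition $\xi = uv$ is unique given $u$ or given $v$), so $\|H_\xi^{U,V}\|_1 \leq 1$ and $\|H_\xi^{U,V}\|_\infty \leq 1$; the inequality $\|M\|_2 \leq \sqrt{\|M\|_1\|M\|_\infty}$ from the preliminaries then gives $\|H_\xi^{U,V}\|_2 \leq 1$. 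For $H_p^{U,V}$, $\sum_{v\in V} p(uv) \leq p(u\Sigma^*) \leq 1$ and, symmetrically, $\sum_{u\in U} p(uv) \leq 1$ (each $w\in\Sigma^*$ appears at most once in the sum), yielding the same $\leq 1$ bound on both $\|\cdot\|_1$ and $\|\cdot\|_\infty$.

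For the trace bound, I would write $\mathbb{E}[\operatorname{tr}(X^2)] = 2\sum_{u\in U, v\in V}\mathbb{E}[Z_{uv}^2]$. Since $Z_{uv} = \mathbf{1}_{\xi=uv} - p(uv)$, $\mathbb{E}[Z_{uv}^2]$ is simply the Bernoulli variance $p(uv)(1-p(uv)) \leq p(uv)$. Summing, $\sum_{u\in U, v\in V} p(uv) \leq \sum_{u_1,u_2\in\Sigma^*} p(u_1u_2) = S_p^{(2)}$, so $\mathbb{E}[\operatorname{tr}(X^2)] \leq 2S_p^{(2)}$.

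The most delicate step is the third bound, $\|\mathbb{E}[X^2]\|_2 \leq S_p^{(2)}$, which reduces to controlling $\|\mathbb{E}[ZZ^T]\|_2$ and $\|\mathbb{E}[Z^TZ]\|_2$. Here I would use the preceding lemma entry-wise: $|[\mathbb{E}(ZZ^T)]_{u,u'}| = |\sum_v \mathbb{E}(Z_{uv}Z_{u'v})| \leq \sum_{v\in V} p(u'v)$, and symmetrically (since $\mathbb{E}(Z_{uv}Z_{u'v})$ is symmetric in $u,u'$) also $\leq \sum_{v\in V} p(uv)$. Then $\|\mathbb{E}(ZZ^T)\|_\infty = \max_u \sum_{u'} |[\mathbb{E}(ZZ^T)]_{u,u'}| \leq \sum_{u'\in U}\sum_{v\in V} p(u'v) \leq S_p^{(2)}$, and the same bound holds for $\|\mathbb{E}(ZZ^T)\|_1$ by the symmetric estimate. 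Applying $\|M\|_2\leq \sqrt{\|M\|_1\|M\|_\infty}$ gives $\|\mathbb{E}(ZZ^T)\|_2 \leq S_p^{(2)}$, and an identical computation with the second inequality of the lemma handles $\mathbb{E}(Z^TZ)$. The main subtlety to watch out for here is that these $\ell_1/\ell_\infty$ bounds must be uniform in the sums' upper indices so that they hold even when $U=V=\Sigma^*$, which is precisely what the dimension-free claim requires; the bookkeeping works out because the outermost sum is dominated by $S_p^{(2)}$ regardless of the size of $U$ and $V$.
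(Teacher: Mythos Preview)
Your proof is correct and follows essentially the same route as the paper: bounding $\|Z\|_2$ via $\|\cdot\|_1$ and $\|\cdot\|_\infty$, computing the trace as $\sum_{u,v}\Esp(Z_{uv}^2)\leq \sum_{u,v}p(uv)$, and controlling $\|\Esp(ZZ^T)\|_2$ through the entrywise bound from the preceding lemma. The only cosmetic difference is in the first part, where the paper bounds $\sum_v|Z_{u,v}|\leq 1+p(u\Sigma^*)\leq 2$ directly rather than splitting via the triangle inequality as you do; both arguments yield the same constant.
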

\begin{proof}

1. $\forall u\in U$, $\sum_{v\in V} |Z_{u,v}|=\sum_{v\in V} |{\mathbf
  1}_{\xi=uv}-p(uv)|\leq 1+ p(u\Sigma^*)\leq 2$. Therefore, $||Z||_{\infty}\leq 2$. In
a similar way, it can be shown that $||Z||_1\leq
2$. Hence, $$ ||X||_2=||Z||_2\leq \sqrt{||Z||_{\infty}||Z||_{1}}\leq
2.$$

2. For all $(u,u')\in U^2$~: $
  ZZ^T[u,u']=\sum_{v\in
    V}Z_{u,v}Z_{u',v}$. 

Therefore, 
\begin{align*}
  \Esp(Tr(ZZ^T)) &=\Esp(\sum_{u\in U}ZZ^T[u,u])\\
&=\Esp(\sum_{u\in U,v\in V}Z_{u,v}Z_{u,v})\\
&\leq\sum_{u\in U,v\in V}\Esp(Z_{u,v}Z_{u,v})\\
&\leq S_p^{(2)}.
\end{align*}
In a similar way, it can be proved that $\Esp(Tr(Z^TZ))\leq
S_p^{(2)}$ and therefore, $\Esp(Tr(X^2))\leq 2 S_p^{(2)}.$

3. For any $u\in U$, 
\begin{align*}
  \sum_{u'\in U}|\Esp(ZZ^T[u,u'])|&\leq \sum_{u'\in U,v\in
    V}|\Esp(Z_{uv}Z_{u'v})|\\
&\leq \sum_{u'\in U,v\in
    V} p(u'v)\\
  &\leq S_p^{(2)}.
\end{align*}
Hence, $||ZZ^T||_{\infty}\leq S_p^{(2)}.$ It can be proved, in a
similar way, that $||Z^TZ||_{\infty}\leq S_p^{(2)}$, $||ZZ^T||_{1}\leq
S_p^{(2)}$ and $||Z^TZ||_{1}\leq S_p^{(2)}$. Therefore, $||X^2||_2 \leq S_p^{(2)}.$
\end{proof}

We can now prove the main theorem of this section: 

\begin{theorem}\label{th:hankel}
Let $p$ be a rational stochastic language and let $S$ be a sample of  $N$ strings drawn i.i.d. from 
$p$. For all $t>0$, \small
$$Pr\left[||H_S^{U,V}-H_p^{U,V}||_2>\sqrt{\frac{2 S_p^{(2)} t}{N}}+\frac{2t}{3N}\right]\leq
  2t(e^t-t-1)^{-1}.$$ \normalsize
\end{theorem}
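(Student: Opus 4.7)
The proof is essentially a plug-and-chug application of Theorem \ref{conc2} using the parameters computed in Lemma \ref{param1}, so my plan is to set up the notation cleanly and then invoke both results.

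First, I would write $H_S^{U,V}-H_p^{U,V} = \frac{1}{N}\sum_{i=1}^N Z(\xi_i)$, where $\xi_1,\ldots,\xi_N$ are the i.i.d.\ sample strings and $Z(\xi) = H_\xi^{U,V}-H_p^{U,V}$ is the centered random matrix introduced just before Lemma~1. Because $\Esp(Z(\xi))=0$ by construction, each centered matrix has zero mean, which is the first hypothesis needed by Theorem~\ref{conc2}.

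Next, I would pass from $Z_i$ to its dilation $X_i$. Since dilation is a linear map, $\frac{1}{N}\sum_i X_i$ is itself the dilation of $\frac{1}{N}\sum_i Z_i$, so by the identity $\|X\|_2=\|Z\|_2$ recalled just after the introduction of the dilation principle, we obtain
$$\Bigl\|\tfrac{1}{N}\sum_{i=1}^N X_i\Bigr\|_2 = \|H_S^{U,V}-H_p^{U,V}\|_2.$$
This identity is exactly what lets us transfer the symmetric-matrix Bernstein bound to the rectangular Hankel setting.

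Then I would read off the three parameters from Lemma~\ref{param1}: $b=2$, $\sigma^2 = S_p^{(2)}$, and $\sigma^2 k = 2 S_p^{(2)}$ (so $k=2$). Plugging these into Theorem~\ref{conc2} yields, for every $t>0$,
$$Pr\left[\Bigl\|\tfrac{1}{N}\sum_i X_i\Bigr\|_2 > \sqrt{\tfrac{2 S_p^{(2)} t}{N}} + \tfrac{2t}{3N}\right] \le \frac{2 t}{e^t - t - 1},$$
which combined with the dilation identity is precisely the claimed bound.

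There is no real obstacle here: the heavy lifting has been done in Lemma~\ref{param1} (bounding $\|X\|_2$, $\|\Esp X^2\|_2$, and $\Esp\,tr(X^2)$) and in the reformulation of Hsu--Kakade--Zhang's matrix Bernstein inequality as Theorem~\ref{conc2}. The only minor point worth stating carefully is that the bounds from Lemma~\ref{param1} do not depend on $|U|$ or $|V|$ (they only involve $S_p^{(2)}$ and the absolute constant $2$), which is exactly what makes the resulting concentration inequality dimension-free and valid even in the bi-infinite case $U=V=\Sigma^*$.
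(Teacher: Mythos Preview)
Your proposal is correct and takes essentially the same approach as the paper: set up the centered matrices $Z_i$, pass to their dilations $X_i$, read off $b=2$, $\sigma^2=S_p^{(2)}$, $k=2$ from Lemma~\ref{param1}, and apply Theorem~\ref{conc2}. Your write-up is simply more explicit than the paper's terse proof about why $\|\tfrac{1}{N}\sum_i X_i\|_2=\|H_S^{U,V}-H_p^{U,V}\|_2$ via linearity of the dilation map, which is a helpful clarification.
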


\begin{proof}
 Let $\xi_1, \ldots, \xi_N$ be $N$ independent
copies of $\xi$, let $Z_i=Z(\xi_i)$ and let $X_i$ be the dilation
of $Z_i$ for $i=1,\ldots,N$.
Lemma~\ref{param1} shows that the 4 conditions of Theorem~\ref{conc2} are fulfilled
with $b=2, \sigma^2=S_p^{(2)}\textrm{ and }k=2.$
\end{proof}

This bound is independent from $U$ and $V$. It can be noticed that the
proof also provides a dimension dependent bound by replacing
$S_p^{(2)}$ with $\sum_{(u,v)\in U\times V}p(uv)$, which may result in
a significative improvement if $U$ or $V$ are small.


\vspace{-1mm}\subsection{Bound for the prefix Hankel Matrix $H_{\overline{p}}^{U,V}$}
\label{s:prefix}

The random matrix $\overline{Z}(\xi)=\overline{H}_{\xi}^{U,V}-H_{\overline{p}}^{U,V}$ is defined
by $\overline{Z}_{u,v}={\mathbf 1}_{uv\in Pref(\xi)}
-\overline{p}(uv).$ It can easily be shown that $||\overline{Z}||_2$
may be unbounded if $U$ or $V$ are
unbounded: $||\overline{Z}||_2=\Omega(|\xi|^{1/2})$.
Hence, Theorem~\ref{conc2}
cannot be directly applied, which suggests that the
concentration of  $\overline{Z}$ around its mean
could be far weaker than the concentration of $Z$.  

For any $\eta\in[0,1]$, we define a smoothed variant of $\overline{p}$
by $$\overline{p}_{\eta}(u) =\sum_{x\in \Sigma^*}\eta^{|x|}p(ux)=\sum_{n\geq 0}\eta^np(u\Sigma^n).$$

Note that $\overline{p}_1=\overline{p}$, $\overline{p}_0=p$ and
that $p(u)\leq \overline{p}_{\eta}(u)\leq \overline{p}(u)$ for any
string $u$. Therefore,
the functions $\overline{p}_{\eta}$ are natural
intermediates between $p$ and  $\overline{p}$. Moreover, when $p$ is rational, each
$\overline{p}_{\eta}$ is also rational.

\begin{proposition}
Let $p$ be a rational stochastic language and let $\langle I,
(M_x)_{x\in \Sigma},T\rangle$ be a minimal linear
representation of $p$. Let
$\overline{T}_{\eta}=(I_d-\eta M_{\Sigma})^{-1}T$. Then,
$\overline{p}_{\eta}$ is rational and $\langle I,
(M_x)_{x\in \Sigma},\overline{T}_{\eta}\rangle$  is a linear
representation of $\overline{p}_{\eta}$. 
\end{proposition}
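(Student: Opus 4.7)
The plan is to compute $\overline{p}_\eta(u)$ directly from the given linear representation of $p$ and recognise the result as $I^T M_u \overline{T}_\eta$. Since the morphism $M$ and initial vector $I$ are unchanged, the only real content of the statement is to identify the correct terminal vector and to justify the manipulations involved.

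First I would rewrite the definition by grouping strings of equal length:
\[
\overline{p}_\eta(u) \;=\; \sum_{n\geq 0}\eta^n \sum_{x\in\Sigma^n} p(ux) \;=\; \sum_{n\geq 0}\eta^n \sum_{x\in\Sigma^n} I^T M_u M_x T.
\]
Using the morphism property of $M$, the inner sum equals $I^T M_u M_\Sigma^n T$, because $\sum_{x\in\Sigma^n} M_x = M_\Sigma^n$ (expand $M_\Sigma^n$ and use that $M_{x_1\cdots x_n} = M_{x_1}\cdots M_{x_n}$). Pulling $I^T M_u$ outside the sum over $n$ then yields
\[
\overline{p}_\eta(u) \;=\; I^T M_u \Bigl(\sum_{n\geq 0}(\eta M_\Sigma)^n\Bigr) T.
\]

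The one analytic step to justify is that the matrix series $\sum_{n\geq 0}(\eta M_\Sigma)^n$ converges to $(I_d-\eta M_\Sigma)^{-1}$ for every $\eta\in[0,1]$. The paper has already stated that $\sum_{n\geq 0} M_\Sigma^n$ converges, which by a standard argument forces $M_\Sigma^n\to 0$ and hence the spectral radius of $M_\Sigma$ to be strictly less than $1$. Therefore the spectral radius of $\eta M_\Sigma$ is at most $\eta\,\rho(M_\Sigma) < 1$, $(I_d-\eta M_\Sigma)$ is invertible, and the Neumann series converges to its inverse. Substituting gives $\overline{p}_\eta(u) = I^T M_u \overline{T}_\eta$ for every $u\in\Sigma^*$, which is the definition of $\langle I,(M_x)_{x\in\Sigma},\overline{T}_\eta\rangle$ being a linear representation of $\overline{p}_\eta$, and in particular shows that $\overline{p}_\eta$ is rational.

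I do not expect a real obstacle here: the result is essentially a bookkeeping computation that generalises the already-stated formula for $\overline{p}=\overline{p}_1$, with $\eta$ simply acting as a discount on $M_\Sigma$. The only point that needs a sentence of care is the convergence justification above; everything else is an application of the morphism property and the geometric series identity for matrices.
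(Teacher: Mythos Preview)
Your argument is correct and follows exactly the same route as the paper's proof, which simply writes $\overline{p}_{\eta}(u)=\sum_{n\geq 0}I^T M_u\, \eta^n M_{\Sigma}^n T = I^T M_u\bigl(\sum_{n\geq 0}\eta^n M_{\Sigma}^n\bigr)T = I^T M_u \overline{T}_{\eta}$ in one line. Your version is just more explicit about the morphism identity $\sum_{x\in\Sigma^n}M_x=M_{\Sigma}^n$ and about the convergence of the Neumann series, which the paper leaves implicit from the earlier remark that $\sum_{n\geq 0}M_{\Sigma}^n$ converges for a minimal representation.
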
 
\begin{proof}
For any string $u$, $\overline{p}_{\eta}(u)=\sum_{n\geq
  0}I^TM_u \eta^nM_{\Sigma}^nT=I^TM_u (\sum_{n\geq
  0}\eta^nM_{\Sigma}^n)T=I^TM_u \overline{T}_{\eta}$.
\end{proof}

Note that $T$ can be computed from $\overline{T}_{\eta}$ when $\eta$ and 
$M_{\Sigma}$ are known and therefore, it is a consistent learning
strategy to learn $\overline{p}_{\eta}$ from
the data, for some $\eta$,  and next, to derive $p$. 

For any $0\leq \eta \leq 1$, let $\overline{Z}_{\eta}(\xi)$ be the random
matrix defined by
\begin{align*}
  \overline{Z}_{\eta}[u,v]&=\sum_{x\in \Sigma^*}\eta^{|x|}{\mathbf
    1}_{\xi=uvx} -\overline{p}_{\eta}(uv)\\
&=\sum_{x\in
    \Sigma^*}\eta^{|x|}({\mathbf 1}_{\xi=uvx} -p(uvx)).
\end{align*}
for any $(u,v)\in U\times V$. 
It is clear that $\Esp(\overline{Z}_{\eta})=0$ and we show
below that $||\overline{Z}_{\eta}||_2$ is bounded if $\eta<1$. 

The moments $S_{\overline{p}_{\eta}}^{(k)}$ can naturally be
associated with $\overline{p}_{\eta} $. For any $0\leq \eta\leq 1$ and
any $k\geq 1$, let $$S_{\overline{p}_{\eta}}^{(k)}=\sum_{u_1u_2\ldots
  u_k\in \Sigma^*}\overline{p}_{\eta} (u_1u_2\ldots u_k).$$ We have
$S_{\overline{p}_{\eta}}^{(k)}= I^T(I_d-M_{\Sigma})^{-k}(I_d-\eta
M_{\Sigma})^{-1}T$ and it is clear that $S_{\overline{p}_{0}}^{(k)}=
S_p^{(k)}$ and $S_{\overline{p}_{1}}^{(k)}= S_p^{(k+1)}.$

\begin{lemma}
$$||\overline{Z}_{\eta}||_2\leq \frac{1}{1-\eta}+S_{\overline{p}_{\eta}}^{(1)}.$$
\end{lemma}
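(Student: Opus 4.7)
The natural route is to control $\|\overline{Z}_{\eta}\|_2$ via the inequality $\|M\|_2 \leq \sqrt{\|M\|_1 \|M\|_{\infty}}$ from the preliminaries, and to prove that each of $\|\overline{Z}_{\eta}\|_1$ and $\|\overline{Z}_{\eta}\|_{\infty}$ is already bounded by the claimed quantity $\frac{1}{1-\eta}+S_{\overline{p}_{\eta}}^{(1)}$. Taking the square root then immediately yields the lemma, since both factors share the same bound.

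To bound $\|\overline{Z}_{\eta}\|_{\infty}$, I fix $u \in U$ and examine $\sum_{v \in V} |\overline{Z}_{\eta}[u,v]|$. Applying the triangle inequality to the defining expression $\overline{Z}_{\eta}[u,v] = \sum_{x \in \Sigma^*}\eta^{|x|}({\mathbf 1}_{\xi=uvx}-p(uvx))$ splits the row-sum into a ``realization'' piece $\sum_{v,x}\eta^{|x|}{\mathbf 1}_{\xi=uvx}$ and an ``expectation'' piece $\sum_{v,x}\eta^{|x|}p(uvx) = \sum_{v}\overline{p}_{\eta}(uv)$. The expectation piece is bounded by $S_{\overline{p}_{\eta}}^{(1)}$ since the strings $uv$ are distinct as $v$ varies and they are all counted (non-negatively) in the total mass $\sum_{w\in \Sigma^*}\overline{p}_{\eta}(w)$.

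The realization piece is the main combinatorial step. Fix the realization $\xi$: a decomposition $\xi = uvx$ requires $u$ to be a prefix of $\xi$, and then $vx$ is the fixed suffix $y$ with $|y|=|\xi|-|u|$. Any $v\in V$ that is a prefix of $y$ determines $x$ uniquely, and different choices of $v$ produce \emph{distinct} values of $|x|$ (since $|v|+|x|=|y|$). Hence $\sum_{v\in V}\sum_x \eta^{|x|}{\mathbf 1}_{\xi=uvx}$ is a sum of $\eta^k$ over distinct nonnegative integers $k$, which is at most $\sum_{k\geq 0}\eta^k = \frac{1}{1-\eta}$. Combining gives $\|\overline{Z}_{\eta}\|_{\infty} \leq \frac{1}{1-\eta}+S_{\overline{p}_{\eta}}^{(1)}$.

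The argument for $\|\overline{Z}_{\eta}\|_1$ is symmetric: fixing $v\in V$ and summing over $u\in U$, the expectation piece $\sum_u \overline{p}_{\eta}(uv)$ is again bounded by $S_{\overline{p}_{\eta}}^{(1)}$, while for the realization piece each decomposition $\xi = uvx$ corresponds to a distinct occurrence position of $v$ in $\xi$ and therefore to a distinct value of $|x|$, so the geometric-series bound $\frac{1}{1-\eta}$ applies unchanged. The single subtle point I expect to spell out carefully is precisely this ``distinct $|x|$'' observation; once it is in hand, the rest is pure bookkeeping and the lemma follows by $\|\overline{Z}_{\eta}\|_2 \leq \sqrt{\|\overline{Z}_{\eta}\|_1\|\overline{Z}_{\eta}\|_{\infty}} \leq \frac{1}{1-\eta}+S_{\overline{p}_{\eta}}^{(1)}$.
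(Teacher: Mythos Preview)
Your proposal is correct and follows essentially the same route as the paper: bound both $\|\overline{Z}_{\eta}\|_{\infty}$ and $\|\overline{Z}_{\eta}\|_{1}$ by splitting each row/column sum into the realization piece (controlled by the geometric series $\sum_{k\geq 0}\eta^k$) and the expectation piece (controlled by $S_{\overline{p}_{\eta}}^{(1)}$), then conclude via $\|M\|_2\leq\sqrt{\|M\|_1\|M\|_\infty}$. The paper writes the realization bound as the explicit partial sum $1+\eta+\cdots+\eta^{|\xi|-|u|}$, which is exactly your ``distinct $|x|$'' observation made concrete.
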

\begin{proof}
Indeed, let $u\in U$. 
\begin{align*}
  \sum_{v\in V}|\overline{Z}_{\eta}[u,v]|&\leq \sum_{v,x\in \Sigma^*}\eta^{|x|}{\mathbf
    1}_{\xi=uvx} + \sum_{v,x\in \Sigma^*}\eta^{|x|}p(uvx)\\
&\leq (1+\eta + \ldots + \eta^{|\xi|-|u|})+S_{\overline{p}_{\eta}}^{(1)}\\
&\leq \frac{1}{1-\eta}+S_{\overline{p}_{\eta}}^{(1)}.
\end{align*}
Hence, $||\overline{Z}_{\eta}||_{\infty}\leq
\frac{1}{1-\eta}+S_{\overline{p}_{\eta}}^{(1)}$. Similarly, $||\overline{Z}_{\eta}||_{1}\leq
\frac{1}{1-\eta}+S_{\overline{p}_{\eta}}^{(1)}$, which completes the
proof.
\end{proof}

When $U$ and $V$ are bounded, let $l$ be the maximal length of a
string in $U\cup V$. It can easily be shown that
$||\overline{Z}_{\eta}||_2\leq l+1+S_{\overline{p}_{\eta}}^{(1)}$ and
therefore, in that case,
\begin{equation}
||\overline{Z}_{\eta}||_2\leq Min(l+1,
\frac{1}{1-\eta})+S_{\overline{p}_{\eta}}^{(1)}\label{eq:2}
\end{equation}
which holds even if $\eta=1$.

\begin{lemma}
$|\Esp(\overline{Z}_{\eta}[u,v]\overline{Z}_{\eta}[u',v])|\leq
\overline{p}_{\eta}(u'v)$, for any $u,u',v\in \Sigma^*$.
\end{lemma}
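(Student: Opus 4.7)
The plan is to mimic the proof of the analogous Lemma in Section~\ref{s:standard}. First I would write $\overline{Z}_{\eta}[u,v] = Y_{u,v} - \overline{p}_\eta(uv)$, where $Y_{u,v} = \sum_{x \in \Sigma^*} \eta^{|x|} {\mathbf 1}_{\xi = uvx}$, so that $\Esp(Y_{u,v}) = \overline{p}_\eta(uv)$ and
$$\Esp(\overline{Z}_{\eta}[u,v]\,\overline{Z}_{\eta}[u',v]) = \Esp(Y_{u,v} Y_{u',v}) - \overline{p}_\eta(uv)\,\overline{p}_\eta(u'v).$$

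Next, I would expand $Y_{u,v} Y_{u',v} = \sum_{x,y} \eta^{|x|+|y|} {\mathbf 1}_{\xi = uvx}\,{\mathbf 1}_{\xi = u'vy}$ and observe that the product of indicators is nonzero iff $\xi = uvx = u'vy$; for such a $\xi$, both $uv$ and $u'v$ belong to $\mathrm{Pref}(\xi)$, and $x,y$ are then uniquely determined by $\xi$ as the suffixes after $uv$ and $u'v$ respectively. The key step is the pointwise bound $\eta^{|x|+|y|} \leq \eta^{|y|}$, valid because $\eta \in [0,1]$ and $|x| \geq 0$. Dropping the now redundant constraint that $uv$ be a prefix of $\xi$ and reindexing the sum by $y$ yields
$$\Esp(Y_{u,v} Y_{u',v}) \leq \sum_{y \in \Sigma^*} \eta^{|y|} p(u'vy) = \overline{p}_\eta(u'v).$$

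Finally, since $\overline{p}_\eta(uv) \leq \overline{p}(uv) = p(uv\Sigma^*) \leq 1$, the product $\overline{p}_\eta(uv)\,\overline{p}_\eta(u'v)$ also lies in $[0, \overline{p}_\eta(u'v)]$. Both quantities in the difference are thus nonnegative and bounded above by $\overline{p}_\eta(u'v)$, so the absolute value of their difference is at most $\overline{p}_\eta(u'v)$, which is the desired conclusion. The main subtlety worth highlighting is that the bound $\eta^{|x|+|y|} \leq \eta^{|y|}$ holds regardless of which of $u, u'$ is longer, so the asymmetric form $\overline{p}_\eta(u'v)$ stated in the lemma is valid without any length ordering on $u, u'$ and without breaking into cases.
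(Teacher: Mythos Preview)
Your proof is correct and follows essentially the same approach as the paper. Both arguments rewrite the expectation as the covariance $\Esp(Y_{u,v}Y_{u',v})-\overline p_\eta(uv)\,\overline p_\eta(u'v)$ and then use the two facts that at most one $x$ can satisfy $\xi=uvx$ (so the indicator sum contributes at most $1$) and that $\overline p_\eta(uv)\le 1$; the only cosmetic difference is that the paper keeps the two pieces together, factoring out $\sum_{x'}\eta^{|x'|}p(u'vx')$ and bounding the remaining bracket by $1$ in absolute value, whereas you bound each of the two nonnegative terms by $\overline p_\eta(u'v)$ separately and conclude via $a,b\in[0,c]\Rightarrow |a-b|\le c$.
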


\begin{proof}
We have \small $\Esp(({\mathbf
  1}_{\xi=w}-p(w)) ({\mathbf
  1}_{\xi=w'}-p(w')))=\Esp({\mathbf
  1}_{\xi=w}{\mathbf
  1}_{\xi=w'})-p(w)p(w')$. \normalsize Therefore,  

\small

$\Esp(\overline{Z}_{\eta}[u,v]\overline{Z}_{\eta}[u',v])$
\begin{align*}
&=
  \sum_{x,x'}\eta^{|xx'|}[\Esp({\mathbf 1}_{\xi=uvx}{\mathbf 1}_{\xi=u'vx'})-p(u'vx')p(uvx)]\\
&=
  \sum_{x,x',w}\eta^{|xx'|}p(w) {\mathbf
    1}_{w=u'vx'}[{\mathbf 1}_{w=uvx}-p(uvx)]\\
&=
  \sum_{x,x'}\eta^{|xx'|}p(u'vx')[{\mathbf 1}_{u'vx'=uvx}-p(uvx)]\\
&=  \sum_{x'}\eta^{|x'|}p(u'vx')[\sum_x\eta^{|x|}({\mathbf 1}_{u'vx'=uvx}-p(uvx))]
\end{align*}
and
$$|\Esp(\overline{Z}_{\eta}[u,v]\overline{Z}_{\eta}[u',v])|\leq \sum_{x'}\eta^{|x'|}p(u'vx')=\overline{p}_{\eta}(u'v)$$
 since
$$-1\leq -\overline{p}_{\eta}(uv)\leq \sum_x \eta^{|x|}({\mathbf
  1}_{u'vx'=uvx}-p(uvx))\leq 1$$
i.e.\
$$|\sum_x \eta^{|x|}({\mathbf
  1}_{u'vx'=uvx}-p(uvx))|\leq 1.$$\normalsize
\end{proof}

\begin{lemma}
$$||\Esp(\overline{Z}_{\eta}\ \overline{Z}_{\eta}^T)||_2\leq
S_{\overline{p}_{\eta}}^{(2)}\textrm{ and }Tr(\Esp(\overline{Z}_{\eta}\ \overline{Z}_{\eta}^T))\leq
S_{\overline{p}_{\eta}}^{(2)}.$$$$||\Esp(\overline{Z}_{\eta}^T\overline{Z}_{\eta})||_2\leq
S_{\overline{p}_{\eta}}^{(2)}\textrm{ and }Tr(\Esp(\overline{Z}_{\eta}^T\overline{Z}_{\eta}))\leq
S_{\overline{p}_{\eta}}^{(2)}.$$
\end{lemma}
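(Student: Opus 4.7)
The plan is to mirror the structure of parts 2 and 3 of Lemma~\ref{param1}, now applied to the prefix matrix and using the previous lemma in place of the direct bound that was available in the standard case. I would start by writing the entries of $\Esp(\overline{Z}_{\eta}\overline{Z}_{\eta}^T)$ as
$\Esp(\overline{Z}_{\eta}\overline{Z}_{\eta}^T)[u,u']=\sum_{v\in V}\Esp(\overline{Z}_{\eta}[u,v]\overline{Z}_{\eta}[u',v])$
and directly plug the inequality $|\Esp(\overline{Z}_{\eta}[u,v]\overline{Z}_{\eta}[u',v])|\leq \overline{p}_{\eta}(u'v)$ supplied by the preceding lemma.

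For the spectral-norm bound I would use $||M||_2\leq\sqrt{||M||_1||M||_\infty}$ and control both the row and column sums by the same quantity: summing the triangle-inequality estimate across $u'\in U$ gives $\sum_{u'\in U}|\Esp(\overline{Z}_{\eta}\overline{Z}_{\eta}^T)[u,u']|\leq \sum_{u'\in U,v\in V}\overline{p}_{\eta}(u'v)\leq S_{\overline{p}_{\eta}}^{(2)}$, where the last step uses that $S_{\overline{p}_{\eta}}^{(2)}=\sum_{u_1,u_2\in\Sigma^*}\overline{p}_{\eta}(u_1u_2)$ is a sum over ordered pairs (so extending from $U\times V$ to $\Sigma^*\times\Sigma^*$ only enlarges the sum). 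The symmetric column bound is obtained identically, giving $||\Esp(\overline{Z}_{\eta}\overline{Z}_{\eta}^T)||_2\leq S_{\overline{p}_{\eta}}^{(2)}$. For the trace, I would take the diagonal $u=u'$, which turns the preceding lemma's bound into $\Esp(\overline{Z}_{\eta}[u,v]^2)\leq \overline{p}_{\eta}(uv)$, and sum to obtain $Tr(\Esp(\overline{Z}_{\eta}\overline{Z}_{\eta}^T))=\sum_{u\in U,v\in V}\Esp(\overline{Z}_{\eta}[u,v]^2)\leq \sum_{u,v}\overline{p}_{\eta}(uv)\leq S_{\overline{p}_{\eta}}^{(2)}$.

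For the $\overline{Z}_{\eta}^T\overline{Z}_{\eta}$ block, the only thing missing from the preceding toolbox is the analogue $|\Esp(\overline{Z}_{\eta}[u,v]\overline{Z}_{\eta}[u,v'])|\leq \overline{p}_{\eta}(uv')$. I would derive this by repeating the calculation of the previous lemma with $v,v'$ playing the role that $u,u'$ played before: expand, use $\Esp({\mathbf 1}_{\xi=uvx}{\mathbf 1}_{\xi=uv'x'})=\sum_w p(w){\mathbf 1}_{w=uvx}{\mathbf 1}_{w=uv'x'}$ to collapse one of the double sums, factor out $\sum_{x'}\eta^{|x'|}p(uv'x')=\overline{p}_{\eta}(uv')$, and then observe that the residual inner sum $\sum_x\eta^{|x|}({\mathbf 1}_{uv'x'=uvx}-p(uvx))$ lies in $[-1,1]$ by the same argument as before. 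Once this symmetric bound is in hand, the operator-norm and trace estimates for $\Esp(\overline{Z}_{\eta}^T\overline{Z}_{\eta})$ go through verbatim. The only subtle step in the whole plan is precisely this rederivation of the symmetric dual inequality; everything else is a routine $\ell_1/\ell_\infty$-to-$\ell_2$ passage together with the crucial observation that the sums over $U\times V$ are dominated by the unrestricted second moment $S_{\overline{p}_{\eta}}^{(2)}$, which is what makes the bound dimension-free.
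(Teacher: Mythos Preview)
Your proposal is correct and follows essentially the same route as the paper: bound the row sums of $\Esp(\overline{Z}_{\eta}\overline{Z}_{\eta}^T)$ via the preceding lemma, pass from $\ell_1/\ell_\infty$ to $\ell_2$, handle the trace by the diagonal case, and then repeat for the transposed block by first establishing the symmetric analogue $|\Esp(\overline{Z}_{\eta}[u,v]\overline{Z}_{\eta}[u,v'])|\leq \overline{p}_{\eta}(uv')$. The paper compresses the last step into ``similar computations provide all the inequalities,'' and since $\Esp(\overline{Z}_{\eta}\overline{Z}_{\eta}^T)$ is symmetric the $\ell_1$ and $\ell_\infty$ bounds you compute are automatically equal, so your $\sqrt{\|\cdot\|_1\|\cdot\|_\infty}$ step is exactly the paper's single $\|\cdot\|_\infty$ bound.
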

\begin{proof}
Indeed,
\begin{align*}
||\Esp(\overline{Z}_{\eta}\overline{Z}_{\eta}^T)||_{\infty}&\leq
Max_u\sum_{u',v}|\Esp(\overline{Z}_{\eta}[u,v]\overline{Z}_{\eta}[u',v])|\\
&\leq \sum_{u',v,x'}\eta^{|x'|}p(u'vx') \leq S_{\overline{p}_{\eta}}^{(2)}.
\end{align*}
In the same way,
$$Tr(\Esp(\overline{Z}_{\eta}\overline{Z}_{\eta}^T))=\sum_{u,v}\Esp(\overline{Z}_{\eta}[u,v]\overline{Z}_{\eta}[u,v])\leq
S_{\overline{p}_{\eta}}^{(2)}.$$
Similar computations provide all the inequalities. 
\end{proof}

Therefore, we can apply the Theorem~\ref{conc2} with $b=\frac{1}{1-\eta}+S_{\overline{p}_{\eta}}^{(1)},
\sigma^2=S_{\overline{p}_{\eta}}^{(2)}$ and $k=2$.

\begin{theorem}\label{th:hankelPrefix}
Let $p$ be a rational stochastic language, let $S$ be a sample of  $N$ strings drawn i.i.d. from 
$p$ and let $0\leq\eta<1$. For all $t>0$, 
$$Pr\left[||\overline{H}_{\eta,S}^{U,V}-H_{\overline{p}_{\eta}}^{U,V}||_2>\sqrt{\frac{2
      S_{\overline{p}_{\eta}}^{(2)}
      t}{N}}+\frac{t}{3N}\left[\frac{1}{1-\eta}+S_{\overline{p}_{\eta}}^{(1)}\right]\right]$$
$$\leq
  2t(e^t-t-1)^{-1}.$$
\end{theorem}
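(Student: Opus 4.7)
The plan is to mirror the proof of Theorem~\ref{th:hankel} almost verbatim, replacing the three Bernstein constants by the ones extracted from the three lemmas that have just been established for $\overline{Z}_\eta$. Concretely, I would draw $\xi_1,\ldots,\xi_N$ i.i.d.\ from $p$, set $Z_i=\overline{Z}_\eta(\xi_i)$ so that $\frac{1}{N}\sum_i Z_i = \overline{H}_{\eta,S}^{U,V}-H_{\overline{p}_\eta}^{U,V}$, and let $X_i$ be the dilation of $Z_i$. Centering is immediate: from the definition of $\overline{Z}_\eta$ one has $\Esp(\overline{Z}_\eta[u,v]) = \sum_x \eta^{|x|} p(uvx) - \overline{p}_\eta(uv) = 0$, and hence $\Esp(X_i)=0$.

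Next, I would collect the three parameters required by Theorem~\ref{conc2}. The first lemma of the subsection yields $\|X_i\|_2=\|Z_i\|_2\leq b$ with $b=\frac{1}{1-\eta}+S_{\overline{p}_\eta}^{(1)}$; this is where the hypothesis $\eta<1$ enters, since otherwise $b$ is infinite. From the dilation identities $\|X^2\|_2\leq\max(\|ZZ^T\|_2,\|Z^TZ\|_2)$ and $\mathrm{Tr}(X^2)=\mathrm{Tr}(ZZ^T)+\mathrm{Tr}(Z^TZ)$ together with the last lemma, one gets $\|\Esp(X_i^2)\|_2\leq \sigma^2$ with $\sigma^2=S_{\overline{p}_\eta}^{(2)}$ and $\Esp(\mathrm{Tr}(X_i^2))\leq 2S_{\overline{p}_\eta}^{(2)}=\sigma^2 k$ with $k=2$.

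Plugging $(b,\sigma^2,k)=\bigl(\tfrac{1}{1-\eta}+S_{\overline{p}_\eta}^{(1)},\,S_{\overline{p}_\eta}^{(2)},\,2\bigr)$ into Theorem~\ref{conc2}, together with $\|\frac{1}{N}\sum X_i\|_2 = \|\overline{H}_{\eta,S}^{U,V}-H_{\overline{p}_\eta}^{U,V}\|_2$ (again by the dilation), reproduces the stated tail bound exactly. There is no substantive obstacle beyond the three lemmas already proved; the only subtlety is that the operator-norm bound diverges as $\eta\to 1$, which forces the restriction $\eta<1$ in the statement. If one wished to cover $\eta=1$ at the price of a dimension-dependent constant, inequality~(\ref{eq:2}) could be substituted for $b$, but that would reintroduce a dependence on the maximal string length in $U\cup V$, defeating the dimension-free character of the result.
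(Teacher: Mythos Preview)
Your proposal is correct and follows exactly the paper's own argument: the paper simply states, just before the theorem, that one applies Theorem~\ref{conc2} with $b=\frac{1}{1-\eta}+S_{\overline{p}_{\eta}}^{(1)}$, $\sigma^2=S_{\overline{p}_{\eta}}^{(2)}$ and $k=2$, these constants being supplied by the preceding lemmas on $\overline{Z}_\eta$. Your additional remarks on the role of $\eta<1$ and the possible use of inequality~(\ref{eq:2}) also match the paper's own commentary.
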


Remark that when $\eta=0$ we find back the concentration bound of
Theorem~\ref{th:hankel}, and that Inequality~\ref{eq:2} provides a
bound when $\eta=1$. 

\vspace{-1mm}\subsection{Bound for the factor Hankel Matrix $H_{\widehat{p}^{U,V}}$}
\label{s:factor}

The random matrix $\widehat{Z}(\xi)=\widehat{H}_{\xi}^{U,V}-H_{\widehat{p}^{U,V}}$ is defined
by $$\widehat{Z}_{u,v}=\sum_{x,y\in
\Sigma^*}{\mathbf 1}_{\xi=xuvy}-\widehat{p}(uv).$$ 
$||\widehat{Z}||_2$ is generally unbounded. Moreover, unlike the
prefix case,  $||\widehat{Z}||_2$ can be unbounded even if $U$ and $V$
are finite. Hence, the Theorem~\ref{conc2}
cannot be directly applied either.  

We can also define smoothed variants of $\widehat{p}$
by $$\widehat{p}_{\eta}(u) =\sum_{x,y\in \Sigma^*}\eta^{|xy|}p(xuy)=\sum_{m,n\geq 0}\eta^{m+n}p(\Sigma^mu\Sigma^n)$$
which have properties similar to  functions
$\overline{p}_{\eta}$:
\begin{itemize}
\item $p \leq \widehat{p}_{\eta}\leq \widehat{p}$, $\widehat{p}_1=\widehat{p}$ and $\widehat{p}_0=p$,
\item if $\langle I,
(M_x)_{x\in \Sigma},T\rangle$ be a minimal linear
representation of $p$ then $\langle \widehat{I}_{\eta},
(M_x)_{x\in \Sigma},\overline{T}_{\eta}\rangle$, where $\widehat{I}_{\eta}=(I_d-\eta M_{\Sigma}^T)^{-1}I$,  is a linear
representation of $\hat{p}_{\eta}$. 
\end{itemize}

However, proofs of the previous Section cannot be directly
extended to $\widehat{p}_{\eta}$ because $\overline{p}$ is bounded by
1, a property which is often used in the proofs, while $\widehat{p}$
is not. Next lemma provides a tool which allows to bypass this difficulty.  

\begin{lemma}Let $0<\eta\leq 1$. For any integer
  $n$, $(n+1)\eta^n\leq K_{\eta}$ where   $$K_{\eta}=\left\{
  \begin{array}{cl}
    1 &\textrm{ if }\eta\leq e^{-1} \\
(-e\eta\ln \eta)^{-1}& \textrm{ otherwise.}\\
  \end{array}
\right.$$
\end{lemma}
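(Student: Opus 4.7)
The natural strategy is to replace the integer inequality by a continuous one: define $f(x)=(x+1)\eta^x$ on $[0,\infty)$ and show $f(x)\le K_\eta$ for all $x\ge 0$, which immediately gives the claim at the integers. Since $\eta\in(0,1]$ fixes the sign of $\ln\eta$, this reduces the problem to an elementary one-variable calculus exercise.

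I would differentiate:
\[
f'(x)=\eta^x\bigl(1+(x+1)\ln\eta\bigr),
\]
so the unique critical point (when $\eta<1$) is $x^\ast=-1/\ln\eta-1$. I would then split on whether $x^\ast\ge 0$, i.e.\ whether $-\ln\eta\le 1$, which is exactly $\eta\ge e^{-1}$.

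In the first case, $\eta\le e^{-1}$, one has $x^\ast\le 0$ and $f'(x)<0$ on $[0,\infty)$, so $f$ is decreasing and the maximum is $f(0)=1$, matching $K_\eta=1$. In the second case, $\eta>e^{-1}$, the critical point lies in $(0,\infty)$, $f'$ changes sign from positive to negative there, and $x^\ast$ is a global maximum. A short computation using $\eta^{-1/\ln\eta}=e^{-1}$ gives
\[
f(x^\ast)=\Bigl(-\tfrac{1}{\ln\eta}\Bigr)\eta^{-1/\ln\eta-1}=\frac{1}{-e\eta\ln\eta},
\]
which is exactly $K_\eta$ in this regime. Taking $x=n\in\mathbb{N}$ in each case concludes.

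No step is really an obstacle here; the only minor care-point is the boundary $\eta=e^{-1}$, where both expressions for $K_\eta$ coincide ($(-e\cdot e^{-1}\cdot(-1))^{-1}=1$), and the degenerate endpoint $\eta=1$, where the statement is trivially vacuous since $K_1$ would need to be infinite. I would dispatch these with a one-line remark rather than separate cases.
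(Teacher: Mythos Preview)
Your proof is correct and follows essentially the same approach as the paper: define $f(x)=(x+1)\eta^x$, compute $f'(x)=\eta^x(1+(x+1)\ln\eta)$, locate the critical point $x^\ast=-1-1/\ln\eta$, and split on whether $x^\ast\ge 0$, i.e.\ $\eta\gtrless e^{-1}$. You give more detail than the paper (which omits the case $\eta\le e^{-1}$ and the boundary remarks), but the argument is the same.
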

\begin{proof}
  Let $f(x)=(x+1)\eta^x$. We have $f'(x)=\eta^x(1+(x+1)\ln \eta)$ and
  $f$ takes its maximum for $x_M=-1-1/\ln \eta$, which is positive if
  and only if $\eta> 1/e$. We have $f(x_M) =(-e\eta\ln \eta)^{-1}$.
\end{proof}

\begin{lemma}
Let $w,u\in \Sigma^*$.  Then, 
$$\sum_{x,y\in \Sigma^*}\eta^{|xy|}{\mathbf
  1}_{w=xuy}\leq K_{\eta}\textrm{ and }\widehat{p}(u)\leq K_{\eta}p(\Sigma^*u\Sigma^*).$$
\end{lemma}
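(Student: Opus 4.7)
The plan is to attack both inequalities in sequence, with the first one providing the main combinatorial input for the second. The whole argument rests on the observation that a decomposition $w = xuy$ is completely determined by the starting position of the occurrence of $u$ inside $w$.

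For the first inequality, I would dispose of the trivial case $|u| > |w|$ (where the indicator is always zero) and then set $n = |w| - |u|$. Every decomposition $w = xuy$ forces $|x| + |y| = n$, so $\eta^{|xy|} = \eta^n$ is the same constant for every term. For each choice of $|x| \in \{0, 1, \ldots, n\}$ there is at most one pair $(x,y)$ that can realize $w = xuy$ (namely the prefix of length $|x|$ of $w$ and the corresponding suffix, which must match $u$ in the middle). Hence the sum contains at most $n+1$ nonzero terms and is bounded by $(n+1)\eta^n$, which is at most $K_\eta$ by the preceding lemma.

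For the second inequality I would interchange the order of summation in the definition, writing
\begin{align*}
\widehat{p}_\eta(u) = \sum_{x,y \in \Sigma^*} \eta^{|xy|} p(xuy) = \sum_{w \in \Sigma^*} p(w) \sum_{x,y : w = xuy} \eta^{|xy|}.
\end{align*}
Then I would apply the first inequality to the inner sum for each fixed $w$: it is zero unless $u$ is a factor of $w$, and bounded by $K_\eta$ otherwise. Summing over $w$ that contain $u$ as a factor gives $K_\eta \, p(\Sigma^* u \Sigma^*)$, which is the desired bound (reading $\widehat{p}$ in the statement as the smoothed $\widehat{p}_\eta$ that the section is analyzing; note that the case $\eta = 1$ gives $K_1 = +\infty$, so the inequality is only substantive for $\eta < 1$, and trivially true at $\eta = 1$).

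The only subtle point is the combinatorial counting in the first inequality: one must be careful that the map from a valid decomposition $(x,y)$ to the length $|x|$ is injective, which it is because both $w$ and $u$ are fixed, so $|x|$ determines $x$ as a prefix of $w$, and then $y$ is determined by the suffix. Everything else is a direct reduction to the previous lemma and to Fubini for nonnegative sums.
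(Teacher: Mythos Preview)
Your proof is correct and follows essentially the same approach as the paper: observe that any decomposition $w=xuy$ forces $|xy|=|w|-|u|$ with at most $|w|-|u|+1$ such decompositions, then apply the preceding lemma for the first bound, and interchange the order of summation to derive the second from the first. You are also right that the $\widehat{p}$ in the statement should be read as the smoothed $\widehat{p}_\eta$; this is a notational slip in the paper itself.
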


\begin{proof}
Indeed, if $w=xuy$, then $|xy|=|w|-|u|$ and $u$ appears at most
$|w|-|u|+1$ times as a factor of $w$. 
\begin{align*}
  \widehat{p}(u)&=\sum_{x,y\in \Sigma^*}\eta^{|xy|}p(xuy)\\
&=\sum_{w\in \Sigma^*u\Sigma^*}p(w)\sum_{x,y\in \Sigma^*}\eta^{|xy|}{\mathbf
  1}_{w=xuvy}\\
&\leq K_{\eta}p(\Sigma^*u\Sigma^*).
\end{align*}
\end{proof}

For $\eta \in [0,1]$, let $\widehat{Z}_{\eta}(\xi)$ be the random
matrix defined by
\begin{align*}
  \widehat{Z}_{\eta}[u,v]&=\sum_{x,y\in \Sigma^*}\eta^{|xy|}{\mathbf
    1}_{\xi=xuvy} -\widehat{p}_{\eta}(uv)\\
&=\sum_{x,y\in
    \Sigma^*}\eta^{|xy|}({\mathbf 1}_{\xi=xuvy} -p(xuvy)).
\end{align*}
and, for any $k\geq 0$,  let
$$S_{\widehat{p}_{\eta}}^{(k)}=\sum_{u_1u_2\ldots u_k\in
  \Sigma^*}\widehat{p}_{\eta}(u_1u_2\ldots u_k).$$
It can easily be shown that $\Esp(\widehat{Z}_{\eta})=0$, $S_{\widehat{p}_{\eta}}^{(k)}=
I^T(I_d-\eta M_{\Sigma})^{-1}(I_d-M_{\Sigma})^{-k}(I_d-\eta M_{\Sigma})^{-1}T$, $S_{\widehat{p}_{0}}^{(k)}=
S_p^{(k)}$ and $S_{\widehat{p}_{1}}^{(k)}=
S_p^{(k+2)}.$

It can be shown that $||\widehat{Z}_{\eta}||_2$ is bounded if $\eta<1$. 

\begin{lemma}\ifthenelse{\boolean{vl}}{}{\cite{Denis2013}}$$||\widehat{Z}_{\eta}||_2\leq (1-\eta)^{-2}+S_{\widehat{p}_{\eta}}^{(1)}.$$
\end{lemma}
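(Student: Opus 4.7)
The plan is to mirror the structure of the prefix-case bound: control $\|\widehat{Z}_\eta\|_\infty$ and $\|\widehat{Z}_\eta\|_1$ separately and then invoke $\|M\|_2 \le \sqrt{\|M\|_1 \|M\|_\infty}$. For the $\ell_\infty$-bound, fix $u \in U$ and split via the triangle inequality:
\[
\sum_{v\in V} |\widehat{Z}_\eta[u,v]| \le \sum_{v\in V}\sum_{x,y\in\Sigma^*} \eta^{|xy|}\mathbf{1}_{\xi=xuvy} + \sum_{v\in V} \widehat{p}_\eta(uv).
\]
The second term is at most $\sum_{w\in \Sigma^*}\widehat{p}_\eta(w) = S_{\widehat{p}_\eta}^{(1)}$, so the entire bound reduces to showing that the first sum is at most $(1-\eta)^{-2}$ uniformly in the realization of $\xi$.

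To handle that first sum, I would condition on $\xi = w$ and parameterize the decompositions $xuvy = w$ by the starting position $p = |x|$ of the occurrence of $u$ in $w$ and by the length $j = |v|$. For a fixed occurrence position $p$, the string $x$ is determined, and for each $j \in \{0,\ldots,|w|-|u|-p\}$, the strings $v,y$ are uniquely determined with $|xy| = |w|-|u|-j$. Bounding the number of occurrence positions trivially by $p \in \{0,\ldots,|w|-|u|\}$ and swapping the order of summation yields
\[
\sum_{x,v,y} \eta^{|xy|}\mathbf{1}_{xuvy=w} \le \sum_{p=0}^{|w|-|u|}\sum_{m=p}^{|w|-|u|}\eta^m = \sum_{m=0}^{|w|-|u|}(m+1)\eta^m \le \sum_{m\ge 0}(m+1)\eta^m = \frac{1}{(1-\eta)^2},
\]
which is finite exactly when $\eta < 1$. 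Putting the two pieces together gives $\|\widehat{Z}_\eta\|_\infty \le (1-\eta)^{-2} + S_{\widehat{p}_\eta}^{(1)}$. By the symmetric roles of rows and columns (swap $U$ and $V$, and swap the factorization accordingly), the same argument with $v$ fixed gives $\|\widehat{Z}_\eta\|_1 \le (1-\eta)^{-2} + S_{\widehat{p}_\eta}^{(1)}$, and then $\|\widehat{Z}_\eta\|_2 \le \sqrt{\|\widehat{Z}_\eta\|_1\|\widehat{Z}_\eta\|_\infty}$ yields the lemma.

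The main obstacle is the combinatorial step bounding $\sum_{x,v,y} \eta^{|xy|}\mathbf{1}_{xuvy=w}$. Unlike in the prefix case, where a single geometric series in $|y|$ collapses the computation, here the string $u$ may appear multiple times as a factor of $w$ and each occurrence spawns a full range of $|v|$. The key trick is to not try to exploit the (unknown) set of actual occurrences of $u$ in $w$, but to upper-bound by summing over all positions $p$, then swap the order of summation so that the inner sum has a clean geometric form whose coefficients $(m+1)$ match exactly the derivative identity $\sum_m (m+1)\eta^m = (1-\eta)^{-2}$. This is reminiscent of, and dually to, the factor-of-$K_\eta$ argument of the previous lemma, but here we get the cleaner closed form $(1-\eta)^{-2}$ at the cost of requiring $\eta < 1$ strictly.
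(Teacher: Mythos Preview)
Your proof is correct and follows essentially the same approach as the paper: bound $\|\widehat{Z}_\eta\|_\infty$ and $\|\widehat{Z}_\eta\|_1$ via the triangle inequality, handle the $\widehat{p}_\eta$ part by $S_{\widehat{p}_\eta}^{(1)}$, and bound the indicator part combinatorially by $(1-\eta)^{-2}$. The only cosmetic difference is in the combinatorial step: the paper relaxes the constraint $|x|+|y|\le |\xi|-|u|$ and writes the bound as $(1+\eta+\cdots+\eta^{|\xi|-|u|})^2\le (1-\eta)^{-2}$, whereas you keep the constraint and arrive at the slightly sharper intermediate $\sum_{m=0}^{|\xi|-|u|}(m+1)\eta^m\le (1-\eta)^{-2}$; both reach the same endpoint.
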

\ifthenelse{\boolean{vl}}{
\begin{proof}
Indeed, for all $u$,
\begin{align*}
  \sum_{v\in V}|\widehat{Z}_{\eta}[u,v]|&\leq \sum_{v,x,y\in \Sigma^*}\eta^{|xy|}{\mathbf
    1}_{\xi=xuvy} + \hat{p}_{\eta}(uv)\\
&\leq (1+\eta + \ldots + \eta^{|\xi|-|u|})^2 +S_{\hat{p}_{\eta}}^{(1)}\\
&\leq \frac{1}{(1-\eta)^2}+S_{\overline{p}_{\eta}}^{(1)}.
\end{align*}
Hence, $||\widehat{Z}_{\eta}||_{\infty}\leq
\frac{1}{(1-\eta)^2}+S_{\hat{p}_{\eta}}^{(1)}$. Similarly, $||\overline{Z}_{\eta}||_{1}\leq\frac{1}{(1-\eta)^2}+S_{\hat{p}_{\eta}}^{(1)}$, which completes the
proof.
\end{proof}
}
{}

\ifthenelse{\boolean{vl}}{
\begin{lemma}For any $u,u',v\in \Sigma^*$, 
$|\Esp(\widehat{Z}_{\eta}[u,v]\widehat{Z}_{\eta}[u',v])|\leq K_{{\eta}}\sum_{x'y'}{\eta}^{|x'y'|}p(x'u'vy')$.
\end{lemma}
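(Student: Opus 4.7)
The plan is to expand $\Esp(\widehat{Z}_\eta[u,v]\widehat{Z}_\eta[u',v])$ from the definition of $\widehat{Z}_\eta$, mimic the conditioning argument from the prefix-case lemma to collapse the product of indicators into a probability times a single equality constraint, factor the outer $(x',y')$-sum to expose the bound, and then apply the preceding lemma twice to control the remaining inner bracket by $K_\eta$.

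Concretely, I would first write
\[
\Esp(\widehat{Z}_\eta[u,v]\widehat{Z}_\eta[u',v]) = \sum_{x,y,x',y'} \eta^{|xy|+|x'y'|}\bigl[\Esp(\mathbf{1}_{\xi=xuvy}\mathbf{1}_{\xi=x'u'vy'}) - p(xuvy)p(x'u'vy')\bigr].
\]
Conditioning on $\xi=w$ and using $\mathbf{1}_{w=xuvy}\mathbf{1}_{w=x'u'vy'} = \mathbf{1}_{w=xuvy}\mathbf{1}_{xuvy=x'u'vy'}$ yields $\Esp(\mathbf{1}_{\xi=xuvy}\mathbf{1}_{\xi=x'u'vy'}) = p(x'u'vy')\mathbf{1}_{xuvy=x'u'vy'}$, so factoring out $\eta^{|x'y'|}p(x'u'vy')$ gives
\[
\Esp(\widehat{Z}_\eta[u,v]\widehat{Z}_\eta[u',v]) = \sum_{x',y'} \eta^{|x'y'|}p(x'u'vy')\, A(x',y'),
\]
where $A(x',y') = \sum_{x,y}\eta^{|xy|}\bigl(\mathbf{1}_{xuvy=x'u'vy'} - p(xuvy)\bigr)$. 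The key step is $|A(x',y')| \leq K_\eta$. Setting $w = x'u'vy'$, the sum $\sum_{x,y}\eta^{|xy|}\mathbf{1}_{xuvy=w}$ counts the at most $|w|-|uv|+1$ factorizations of $w$ of the form $xuvy$, each carrying weight $\eta^{|w|-|uv|}$, so the preceding lemma bounds it by $K_\eta$. Similarly, $\sum_{x,y}\eta^{|xy|}p(xuvy) = \widehat{p}_\eta(uv) \leq K_\eta p(\Sigma^*uv\Sigma^*) \leq K_\eta$ by that same lemma. Both being non-negative, their difference lies in $[-K_\eta, K_\eta]$; the triangle inequality on the outer $(x',y')$-sum then gives the stated bound.

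The hard part, which had no analog in the prefix-case lemma, is precisely the bound on $A(x',y')$: in the prefix case, a prefix-factorization $xuv$ of a given string is unique and $\overline{p}_\eta \leq 1$, so the corresponding inner bracket was trivially at most $1$; here both facts fail—a single string may contain $uv$ as a factor many times and $\widehat{p}_\eta$ is unbounded in general—and it is exactly $K_\eta = \max_n (n+1)\eta^n$ that simultaneously tames the factor-multiplicity in $w$ and the potentially large values of $\widehat{p}_\eta$.
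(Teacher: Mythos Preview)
Your proof is correct and follows essentially the same route as the paper: expand the product, condition on $\xi=w$ to factor out $\eta^{|x'y'|}p(x'u'vy')$, and bound the remaining inner bracket $\sum_{x,y}\eta^{|xy|}(\mathbf{1}_{w=xuvy}-p(xuvy))$ by $K_\eta$. Your explicit justification of that last bound---splitting it into two nonnegative terms each bounded by $K_\eta$ via the preceding lemma---is in fact more detailed than what the paper spells out.
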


\begin{proof}
We have 

$\Esp(\widehat{Z}_{\eta}[u,v]\widehat{Z}_{\eta}[u',v])=$
\begin{align*}
  &\sum_{x,x',y,y'}\eta^{|xx'yy'|}[\Esp({\mathbf 1}_{\xi=xuvy}{\mathbf
    1}_{\xi=x'u'vy'}) \\
&-p(x'u'vy')p(xuvy)].
\end{align*}

We remark that 

$\Esp({\mathbf 1}_{\xi=xuvy}{\mathbf
  1}_{\xi=x'u'vy'})-p(x'u'vy')p(xuvy)$
\begin{align*}
  &=\sum_wp(w) {\mathbf 1}_{w=x'u'vy'}({\mathbf 1}_{w=xuvy}-p(xuvy)),
\end{align*}

and therefore,  
$\Esp(\widehat{Z}_{\eta}[u,v]\widehat{Z}_{\eta}[u',v])=$
\begin{align*}
&\sum_{x',y',w}\eta^{|x'y'|}p(w){\mathbf
  1}_{w=x'u'vy'}(\sum_{x,y} \eta^{|xy|}({\mathbf
  1}_{w=xuvy}-
p(xuvy))).
\end{align*}

Moreover, $|\sum_{xy} \eta^{|xy|}({\mathbf
  1}_{w=xuvy}-p(xuvy))|\leq K_{\eta}$. 
\end{proof}
}
{}
\ifthenelse{\boolean{vl}}{

\begin{lemma}
$$||\Esp(\widehat{Z}\widehat{Z}^T)||_2\leq
K_{\eta} S_{\widehat{p}_{\eta}}^{(2)}\textrm{ and }Tr(\Esp(\widehat{Z}\widehat{Z}^T))\leq
K_{\eta} S_{\widehat{p}_{\eta}}^{(2)}.$$
\end{lemma}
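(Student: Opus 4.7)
The plan is to mirror the prefix-case argument in Section~\ref{s:prefix} directly, with the extra factor $K_{\eta}$ inherited from the preceding per-entry lemma. Since $\Esp(\widehat{Z}\widehat{Z}^T)$ is symmetric (being of the form $\Esp(MM^T)$), we have $||\cdot||_1=||\cdot||_{\infty}$ on it, and combined with the standard inequality $||M||_2\leq\sqrt{||M||_1\cdot||M||_{\infty}}$, it is enough to control the infinity norm in order to obtain the spectral bound.

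For that infinity norm, I would expand $\Esp(\widehat{Z}\widehat{Z}^T)[u,u']=\sum_{v\in V}\Esp(\widehat{Z}[u,v]\widehat{Z}[u',v])$, apply the preceding lemma entrywise, and exchange the order of summation to obtain
\begin{align*}
\sum_{u'}|\Esp(\widehat{Z}\widehat{Z}^T)[u,u']|&\leq K_{\eta}\sum_{u',v,x',y'}\eta^{|x'y'|}p(x'u'vy')\\
&=K_{\eta}\sum_{u',v\in\Sigma^*}\widehat{p}_{\eta}(u'v)\\
&\leq K_{\eta}\,S_{\widehat{p}_{\eta}}^{(2)},
\end{align*}
where the equality uses the definition of $S_{\widehat{p}_{\eta}}^{(2)}$ as a sum over pairs of strings, and the final inequality comes from extending the summation of the non-negative quantities $u',v$ to $\Sigma^*\times\Sigma^*$. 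This bound is uniform in $u$, so $||\Esp(\widehat{Z}\widehat{Z}^T)||_{\infty}\leq K_{\eta}S_{\widehat{p}_{\eta}}^{(2)}$, and the spectral bound follows by symmetry.

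For the trace, I use $(\widehat{Z}\widehat{Z}^T)[u,u]=\sum_{v}\widehat{Z}[u,v]^2\geq 0$ together with the $u=u'$ case of the preceding lemma to get
\begin{align*}
Tr(\Esp(\widehat{Z}\widehat{Z}^T))&=\sum_{u,v}\Esp(\widehat{Z}[u,v]^2)\\
&\leq K_{\eta}\sum_{u,v}\widehat{p}_{\eta}(uv)\leq K_{\eta}\,S_{\widehat{p}_{\eta}}^{(2)}.
\end{align*}

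I do not anticipate any serious obstacle: the proof is structurally identical to the prefix-case computation, with the factor $K_{\eta}$ already absorbed into the pointwise bound by the preceding lemma, which in turn relied on the technical inequality $(n+1)\eta^n\leq K_{\eta}$ to compensate for the fact that $\widehat{p}$, unlike $\overline{p}$, is not bounded by $1$. The only bookkeeping point is the identity $\sum_{u',v\in\Sigma^*}\widehat{p}_{\eta}(u'v)=S_{\widehat{p}_{\eta}}^{(2)}$, which is immediate from the definition.
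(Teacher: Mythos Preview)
Your proposal is correct and follows essentially the same route as the paper: bound each entry $|\Esp(\widehat{Z}_{\eta}[u,v]\widehat{Z}_{\eta}[u',v])|$ by $K_{\eta}\widehat{p}_{\eta}(u'v)$ via the preceding lemma, sum over $u',v$ to control $||\Esp(\widehat{Z}\widehat{Z}^T)||_{\infty}$ (the paper does not invoke symmetry explicitly but the argument is the same), and handle the trace by the diagonal case $u=u'$. The only cosmetic slip is in your display, where the middle line already ranges over $\Sigma^*\times\Sigma^*$ so the last step is an equality rather than an inequality; the logic is unaffected.
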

\begin{proof}
We have $$||\Esp(\widehat{Z}\widehat{Z}^T)||_{\infty}\leq
Sup_u\sum_{u',v}|\Esp(\widehat{Z}_{\eta}[u,v]\widehat{Z}_{\eta}[u',v])|$$
Then from previous lemma: \\
$\sum_{u',v}|\Esp(\widehat{Z}_{\eta}[u,v]\widehat{Z}_{\eta}[u',v])|\leq
K_{\eta}S_{\widehat{p}_{\eta}}^{(2)}$ for any $u\in \Sigma^*$.  
Finally, \\
$$Tr(\Esp(\widehat{Z}\widehat{Z}^T))=\sum_{u,v}\Esp(\widehat{Z}_{\eta}[u,v]\widehat{Z}_{\eta}[u,v])\leq K_{\eta}S_{\widehat{p}_{\eta}}^{(2)}.$$
\end{proof}

Similar proof gives \begin{lemma}
$$||\Esp(\widehat{Z}^T\widehat{Z})||_2\leq K_{\eta}
S_{\widehat{p}_{\eta}}^{(2)}\textrm{ and }Tr(\Esp(\widehat{Z}^T\widehat{Z}))\leq
K_{\eta} S_{\widehat{p}_{\eta}}^{(2)}.$$
\end{lemma}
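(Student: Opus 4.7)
The plan is to mirror the previous lemma by swapping the roles of rows and columns. First, observe that the trace bound is actually immediate from the previous lemma: since $Tr(AB) = Tr(BA)$ for compatible matrices, we have
$$Tr(\Esp(\widehat{Z}^T\widehat{Z})) = \Esp\,Tr(\widehat{Z}^T\widehat{Z}) = \Esp\,Tr(\widehat{Z}\widehat{Z}^T) = Tr(\Esp(\widehat{Z}\widehat{Z}^T)) \le K_{\eta}S_{\widehat{p}_{\eta}}^{(2)},$$
so no separate argument is needed for that half.

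For the spectral norm of $\Esp(\widehat{Z}^T\widehat{Z})$, I would first prove the column-analogue of the pointwise lemma: for any $u,v,v'\in\Sigma^*$,
$$|\Esp(\widehat{Z}_{\eta}[u,v]\,\widehat{Z}_{\eta}[u,v'])|\le K_{\eta}\sum_{x',y'}\eta^{|x'y'|}p(x'uv'y').$$
The derivation is the same as before: expand both factors into sums over the $x,y,x',y'$ indices, use $\Esp({\mathbf 1}_{\xi=xuvy}{\mathbf 1}_{\xi=x'uv'y'})-p(xuvy)p(x'uv'y')=\sum_w p(w){\mathbf 1}_{w=x'uv'y'}({\mathbf 1}_{w=xuvy}-p(xuvy))$, pull out the sum over $x',y',w$, and bound the remaining inner sum $|\sum_{x,y}\eta^{|xy|}({\mathbf 1}_{w=xuvy}-p(xuvy))|\le K_{\eta}$ via the earlier lemma controlling $(n+1)\eta^n$.

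From this pointwise estimate, I would bound the $\infty$-norm of $\Esp(\widehat{Z}^T\widehat{Z})$ by
$$\|\Esp(\widehat{Z}^T\widehat{Z})\|_{\infty} \le \sup_v \sum_{v',u}|\Esp(\widehat{Z}_{\eta}[u,v]\widehat{Z}_{\eta}[u,v'])| \le K_{\eta}\sum_{u,v',x',y'}\eta^{|x'y'|}p(x'uv'y') \le K_{\eta}S_{\widehat{p}_{\eta}}^{(2)},$$
since summing $p(x'uv'y')$ over all four arguments recovers $\widehat{p}_{\eta}$ of a two-block concatenation. Since $\Esp(\widehat{Z}^T\widehat{Z})$ is symmetric, $\|\cdot\|_1$ equals $\|\cdot\|_{\infty}$, and the inequality $\|M\|_2\le \sqrt{\|M\|_1\|M\|_{\infty}}$ yields the desired spectral bound $\|\Esp(\widehat{Z}^T\widehat{Z})\|_2 \le K_{\eta}S_{\widehat{p}_{\eta}}^{(2)}$.

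The only potential subtlety is making sure the summations over $u,v',x',y'$ actually fit the definition of $S_{\widehat{p}_{\eta}}^{(2)}$; this works because $S_{\widehat{p}_{\eta}}^{(2)} = \sum_{u_1,u_2}\widehat{p}_{\eta}(u_1u_2) = \sum_{u_1,u_2,x,y}\eta^{|xy|}p(xu_1u_2y)$, and the identification $u_1=u$, $u_2=v'$ shows the two quantities coincide. This is a routine check and I do not expect any real obstacle beyond it.
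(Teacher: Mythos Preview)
Your proposal is correct and follows the same route the paper intends: the paper simply states ``Similar proof gives'' this lemma, meaning one swaps the roles of $u,u'$ and $v,v'$ in the pointwise estimate and in the $\|\cdot\|_\infty$ bound, exactly as you do. Your use of $Tr(AB)=Tr(BA)$ for the trace half is a small shortcut, but the direct computation $Tr(\Esp(\widehat{Z}^T\widehat{Z}))=\sum_{u,v}\Esp(\widehat{Z}_{\eta}[u,v]^2)$ used in the paper's $\widehat{Z}\widehat{Z}^T$ lemma yields the same bound with no extra work, so the two arguments are essentially identical.
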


}
{}

Eventually, we can apply the Theorem~\ref{conc2} with $b=(1-\eta)^{-2}+S_{\widehat{p}_{\eta}}^{(1)},
\sigma^2=K_{\eta}S_{\widehat{p}_{\eta}}^{(2)}$ and $k=2$\ifthenelse{\boolean{vl}}{}{~\cite{Denis2013}}.

\begin{theorem}\label{th:hankelFactor}
Let $p$ be a rational stochastic language, let $S$ be a sample of  $N$ strings drawn i.i.d. from 
$p$ and let $0\leq\eta<1$. For all $t>0$, 
\small
$$\!Pr\left[||\widehat{H}_{\eta,S}^{U,V}-H_{\widehat{p}_{\eta}}^{U,V}||_2>\!\sqrt{\frac{2 K_{\eta}
      S_{\widehat{p}_{\eta}}^{(2)}
      t}{N}}+\frac{t}{3N} \left[\frac{1}{(1-\eta)^2}+S_{\widehat{p}_{\eta}}^{(1)} \right]\!\right]$$
$$\leq
  2t(e^t-t-1)^{-1}.$$
\normalsize
\end{theorem}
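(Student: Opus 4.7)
The plan is to apply Theorem~\ref{conc2} to the i.i.d.\ sequence $X_i$ obtained by dilating $\widehat{Z}_{\eta}(\xi_i)$, where $\xi_1,\ldots,\xi_N$ are i.i.d.\ copies of $\xi\sim p$. Since $\frac{1}{N}\sum_i \widehat{Z}_{\eta}(\xi_i) = \widehat{H}_{\eta,S}^{U,V}-H_{\widehat{p}_{\eta}}^{U,V}$ and dilation preserves the spectral norm, the conclusion of the theorem will follow as soon as we identify admissible values of $b,\sigma^2,k$ such that the four hypotheses of Theorem~\ref{conc2} hold almost surely for $X_1$. The centering hypothesis $\Esp[X_1]=0$ is immediate from $\Esp[\widehat{Z}_{\eta}]=0$.

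For the norm bound, I would read $b$ directly off the preceding lemma, namely $\|\widehat{Z}_{\eta}\|_2 \leq (1-\eta)^{-2}+S_{\widehat{p}_{\eta}}^{(1)}$, so that $b=(1-\eta)^{-2}+S_{\widehat{p}_{\eta}}^{(1)}$. This is exactly the place where the assumption $\eta<1$ is essential: controlling $\sum_{v,x,y}\eta^{|xy|}\mathbf{1}_{\xi=xuvy}$ requires summing over all factorizations $\xi=xuvy$ (with $v$ varying too), which produces the squared geometric series bounded by $(1-\eta)^{-2}$.

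For the two variance-type hypotheses, I would combine the dilation identities $\|\Esp(X_1^2)\|_2 \leq \max(\|\Esp(\widehat{Z}_{\eta}\widehat{Z}_{\eta}^T)\|_2,\|\Esp(\widehat{Z}_{\eta}^T\widehat{Z}_{\eta})\|_2)$ and $\mathrm{tr}(\Esp(X_1^2))=\mathrm{tr}(\Esp(\widehat{Z}_{\eta}\widehat{Z}_{\eta}^T))+\mathrm{tr}(\Esp(\widehat{Z}_{\eta}^T\widehat{Z}_{\eta}))$ with the entrywise bound $|\Esp(\widehat{Z}_{\eta}[u,v]\widehat{Z}_{\eta}[u',v])| \leq K_{\eta}\sum_{x',y'}\eta^{|x'y'|}p(x'u'vy')$ established above. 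Summing over $u'$ and $v$ yields $\|\Esp(\widehat{Z}_{\eta}\widehat{Z}_{\eta}^T)\|_{\infty} \leq K_{\eta}S_{\widehat{p}_{\eta}}^{(2)}$, and the symmetric argument for $\|\cdot\|_1$ gives $\|\Esp(\widehat{Z}_{\eta}\widehat{Z}_{\eta}^T)\|_2 \leq K_{\eta}S_{\widehat{p}_{\eta}}^{(2)}$ via $\|M\|_2\leq \sqrt{\|M\|_1\|M\|_\infty}$. The diagonal sum computation gives $\mathrm{tr}(\Esp(\widehat{Z}_{\eta}\widehat{Z}_{\eta}^T))\leq K_{\eta}S_{\widehat{p}_{\eta}}^{(2)}$ identically. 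The analogous bounds hold for $\widehat{Z}_{\eta}^T\widehat{Z}_{\eta}$, so one may take $\sigma^2 = K_{\eta}S_{\widehat{p}_{\eta}}^{(2)}$ and $k=2$, and substituting into Theorem~\ref{conc2} reproduces the stated inequality verbatim.

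The only real obstacle, compared with the prefix case, is the presence of $K_{\eta}$: a substring $u$ may occur in many positions of a single word, so the naive summation $\sum_{x,y}\eta^{|xy|}\mathbf{1}_{w=xuy}$ is not bounded by $1$ but only by $(|w|-|u|+1)\eta^{|w|-|u|}$; it is precisely the auxiliary lemma $(n+1)\eta^n\leq K_{\eta}$ that absorbs this combinatorial explosion into a finite constant, and that makes the second-moment estimate $\sigma^2=K_{\eta}S_{\widehat{p}_{\eta}}^{(2)}$ possible. Everything else is bookkeeping parallel to the prefix case.
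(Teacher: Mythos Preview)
Your proposal is correct and matches the paper's own argument essentially line for line: the paper also proceeds by dilating $\widehat{Z}_{\eta}$, invokes the lemma $\|\widehat{Z}_{\eta}\|_2\leq (1-\eta)^{-2}+S_{\widehat{p}_{\eta}}^{(1)}$ for $b$, uses the entrywise second-moment bound involving $K_{\eta}$ to obtain $\|\Esp(\widehat{Z}_{\eta}\widehat{Z}_{\eta}^T)\|_2,\ \mathrm{tr}(\Esp(\widehat{Z}_{\eta}\widehat{Z}_{\eta}^T))\leq K_{\eta}S_{\widehat{p}_{\eta}}^{(2)}$ (and the analogous bounds for $\widehat{Z}_{\eta}^T\widehat{Z}_{\eta}$), and then applies Theorem~\ref{conc2} with $b=(1-\eta)^{-2}+S_{\widehat{p}_{\eta}}^{(1)}$, $\sigma^2=K_{\eta}S_{\widehat{p}_{\eta}}^{(2)}$, $k=2$. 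Your remark on why $K_{\eta}$ is needed is exactly the distinction the paper draws between the prefix and factor cases.
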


Remark that when $\eta=0$ we find back the concentration bound of
Theorem~\ref{th:hankel}. 
We provide experimental evaluation of the proposed bounds in the next
Section. 

\section{Experiments}
\label{s:expe}
The proposed bounds are evaluated on the benchmark of PAutomaC~
\cite{VerwerEH12}
which provides samples of strings generated from several probabilistic
automata, designed to evaluate probabilistic automata learning. Eleven problems have been selected from that benchmark for which sparsity
of the Hankel matrices makes the use
of standard SVD algorithms available from \texttt{NumPy} or
\texttt{SciPy}  possible. 
\ifthenelse{\boolean{vl}}{Table \ref{tab:pb_pautomac} provides some
  information about the selected problems.

\begin{table*}[htb]
\vspace{-1mm}  
  \caption{ Properties of the 11 selected problems. Target models are
    of different types: non deterministic
    probabilistic finite automata (PFA), deterministic PFA (DPFA) and
    hidden Markov models (HMM). The size of the Hankel matrices matrices
   is expressed in billions, where g stands for $1\times 10^9$. The sparsity
   is indicated as the percentage of non zero entries in the matrix.}
\vspace{3mm}  
\begin{scriptsize}
  \centering
  \begin{tabular}{|l|l|l|l|l|l|l|l|l|l|l|l|}
    \hline
    \textbf{Problem number} & \textbf{3} & \textbf{4} &
    \textbf{7}& \textbf{15} & \textbf{25} &\textbf{29}&
    \textbf{31}& \textbf{38} & \textbf{39} &\textbf{40}& \textbf{42} \\ \hline
 \textbf{Alphabet size}& 4& 4& 13&14 &10 & 6&5 &10& 14&14&9   \\
$S^{(2)}_p$  & 8.23 &  6.25 &  6.52 &  13.40 &  10.65 &  6.35 &  6.97 &  8.09 &  8.82 &  9.74 &  7.39\\
$S^{(3)}_p$  & 57.84 &  31.06 &  29.61 &  160.92 &  93.34 &  38.11 &
43.53 &  65.87 &  90.81 &  111.84 &  62.11 \\
\hline
 \textbf{Average string length}&7.219 &5.259 &5.523 &12.461 & 9.723& 5.287&6.001 &7.177&7.736 & 8.716&6.350  \\
 \textbf{Max. string length}  & 67& 55& 36& 110& 90& 59& 59&84& 106& 106& 70  \\
\textbf{Size sample $N$}&20000 &100000 &20000 &20000 &20000 &20000&20000 &20000&20000 &20000& 20000  \\
\textbf{Type of target model} & PFA& PFA&DPFA &PFA &HMM &PFA &PFA &HMM&PFA& DPFA&DPFA  \\
\textbf{Nb of states in the target} & 25& 12& 12&26 &40 & 36&12 &14& 6&65& 6\\\hline
\textbf{Size $H_S^{U,V}$ standard} &1.9g &0.5g &0.17g &27g &13g&0.4g &1.4g  &8g&7.7g &15g&3.4g  \\
\textbf{Sparsity}& .0053\% &.0185\% &.0212\% &.0009\% &.0015\% &.0116\%&.0061\% &.0018\%& .0019\%&.0011\%&.0033\%  \\\hline
\textbf{Size of $\overline{H}_S^{U,V}$ prefix} &2.5g &1.8g &0.7g &291g &99g & 2.4g&7.6g& 60g&75g&165g &25g  \\
\textbf{Sparsity}&.0058\% &.0191\% &.0208\% &.0001\% &.0016\% &.0122\% &.0066\% &.0019\%&.0020\% &.0012\%&.0035\%   \\\hline
\textbf{Size of $\hat{H}_S^{U,V}$ factor} &73g &6.4g &3g &3363g &797g & 15.7g& 44g &460g& 761g & 1925g&202g  \\
\textbf{Sparsity}&.0058\%& .0197\% &.0199\% &.0001\% &.0016\% &.0115\% &.0069\% &.0020\%&.0020\% &.0012\%&.0036\%  \\\hline
  \end{tabular}
\end{scriptsize}
  \label{tab:pb_pautomac}
\end{table*}

}{
The size $N$ of the samples is $20000$ except for the
problem $4$ where $N=100000$. Table~\ref{fig:table_Sp} shows some
target properties of the
selected problems: the size of the alphabets and the exact values of $S_p^{(k)}$
computed for the different targets $p$.
\input{Table_Sp_2}
}
Figure~\ref{fig:Sp} shows the
typical behavior of $S_{\overline{p}_{\eta }}^{(1)}$ and
 $S_{\widehat{p}_{\eta }}^{(1)}$, similar for all the problems.
\begin{figure}[H]
 \centering
 \includegraphics[width=8.6cm,height=4.5cm]{./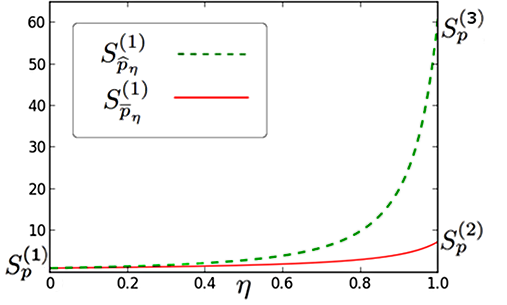}
\caption{Behavior of $S_{\overline{p}_{\eta}}^{(1)}$ and $S_{\widehat{p}_{\eta}}^{(1)}$
  for $\eta \in [0;1]$.}
\label{fig:Sp}
\end{figure}
For each problem, the exact value of $||H^{U,V}_S
-H^{U,V}_p||_2$ is computed for sets $U$ and $V$ of the form $\Sigma^{\leq l}$, trying to
maximize $l$ according to our computing resources. 
It is compared to the bounds provided by
Theorem~\ref{th:hankel} and Equation~\eqref{eq:1}, with 
$\delta=0.05$ (Table~\ref{fig:table_classical}). 
The optimized bound ("opt.''), refers to the case where
$\sigma^2$ has been calculated over $U \times V$ rather than $\Sigma^* \times \Sigma^*$
(see the remark at the end of Section~\ref{s:standard}). Tables~\ref{fig:table_prefix} and~\ref{fig:table_factor} show analog
comparisons for the prefix and the factor cases with different values
of $\eta$. Similar results have been obtained for
all the problems of PautomaC.
\begin{table*}[htb]
 \vspace{-1mm}  \caption{Concentration values from various bounds for $||H^{U,V}
    _S-H^{U,V}_p||_2$ for $U=V=\Sigma^{\leq l}$.} \vspace{3mm}
{\footnotesize
  \centering
  \begin{center}
    \begin{tabular}{|c|c|c|c|c|c|c|c|c|c|c|c|}
      \hline Problem number  &  3     &  4     &  7     &  15    &  25  &  29
      &    31     &  38     &  39     &  40     &  42        \\

      \hline 
$l$ & 8 &  9 &  8 &  5 &  5 &  9 &  7 &  4 &  6 &  4 &  7 \\
      
     \hline
$||H^{U,V}_S - H_p^{U,V}||_2$ &0.0052 &  0.0030 &  0.0064 &  0.0037 &  0.0033 &  0.0045 &  0.0051 &  0.0058 &  0.0049 &  0.0037 &  0.0054\\
      
Eq.~\eqref{eq:1} &  0.1910 &  0.0857 &  0.1917 &  0.1909 &
      0.1935 &  0.1908 &  0.1911 &  0.1852 &  0.1925 &  0.1829 &
      0.1936    \\

\textbf{Th.~\ref{th:hankel}
        (dim. free)}  & 0.0669 &  0.0260 &  0.0595 &  0.0853 &
      0.0761 &  0.0588 &  0.0615 &  0.0663 &  0.0692 &  0.0728 &
      0.0634 \\
      
\textbf{Th.~\ref{th:hankel} (opt. $\boldmath{U,V}$) } & 0.0475 &  0.0228 &  0.0527 &
      0.0284 &  0.0323 &  0.0472 &  0.0437 &  0.0275 &  0.0325 &
      0.0243 &  0.0378  \\
          
      \hline 
    \end{tabular}
  \end{center}
}
\label{fig:table_classical}
\end{table*}

\begin{table*}[!htb]
 \vspace{-1mm}\caption{Concentration values from various bounds for
    $||\overline{H}^{U,V}_S-H ^{U,V}_{\overline{p},\eta}||_2$ (prefix case)  for $U=V=\Sigma^{\leq l}$.}\vspace{3mm}
{\footnotesize
  \centering
 \hspace{-4.1mm} \begin{tabular}{|c|c|c|c|c|c|c|c|c|c|c|c|c|c|}
    \hline\multicolumn{2}{|c|}{Problem number} & 3     &  4     &  7     &  15    &  25  &  29
      &    31     &  38     &  39     &  40     &  42        \\

      \hline \multicolumn{2}{|c|}{$l$} & 8 &  9 &  8 &  5 &  5 &  9 &  7 &  4 &  6 &  4 &  7 \\    
      
\hline\hspace{-1.4mm}$||\overline{H}^{U,V}_S -
      H_{\overline{p},\eta}^{U,V}||_2$\hspace{-1.4mm}& & 0.0067 &
      0.0035  & 0.0085 & 0.0043 & 0.0041  & 0.0055 & 0.0073 &0.0059  & 0.0061 & 0.0044 & 0.0062 \\
    
Eq.~\eqref{eq:1} &\hspace{-1.6mm}$\eta=\frac{1}{2}$\hspace{-1.4mm} & 0.7463 &  0.3326 &  0.7545 &  0.7515 &
      0.7626 &  0.7250 &  0.7369 &  0.7051 &  0.7068 &  0.6753 &
      0.7146 \\

\textbf{Th.~\ref{th:hankelPrefix}
        (dim. free)} & & 0.0890 &  0.0339 &  0.0777 &  0.1162 &
      0.1026 &  0.0770 &  0.0811 &  0.0884 &  0.0931 &  0.0983 &
      0.0844 \\
      
\textbf{Th.~\ref{th:hankelPrefix}
        (opt. $\boldmath{U,V}$)} & & 0.0636 &  0.0299 &  0.0697 &
      0.0398 &  0.0457 &  0.0621 &  0.0577 &  0.0366 &  0.0432 &
      0.0317 &  0.0498 \\ 

      \hline\hline\hspace{-1.4mm} $||\overline{H}^{U,V}_S -
      H_{\overline{p},\eta}^{U,V}||_2$\hspace{-1.5mm}& & 0.0141 &  0.0059 &  0.0217 &  0.0124 &  0.0145 &  0.0116 &  0.0182 &  0.0132 &  0.0135 &  0.0089 &  0.0127 \\
    
Eq.~\eqref{eq:1} &\hspace{-0.6mm}$\eta=\small1$\hspace{-2mm}& 3.1011 &  1.3079 &  2.7839 &  3.5129 &
      3.0283 &  2.9286 &  2.6695 &  2.2395 &  2.8524 &  2.5132 &
      2.7863 \\

\textbf{Th.~\ref{th:hankelPrefix} (dim. free)} & &  0.1784 &  0.0582 &  0.1279 &  0.2967 &
      0.2261 &  0.1450 &  0.1547 &  0.1899 &  0.2230 &  0.2472 &
      0.1846  \\

\textbf{Th.~\ref{th:hankelPrefix}
        (opt. $\boldmath{U,V}$)} & &0.1281 &  0.0518 &  0.1166 &
      0.1062 &  0.1057 &  0.1175 &  0.1099 &  0.0778 &  0.1020 &
      0.0761 &  0.1077 \\

      \hline 
    \end{tabular}
}
\label{fig:table_prefix}
\end{table*}

\begin{table*}[!htb]
 \vspace{-1mm}  \caption{Concentration values from various bounds for
    $||\widehat{H}^{U,V}_S-H ^{U,V}_{\hat{p},\eta}||_2$ (factor case)   for $U=V=\Sigma^{\leq l}$.} \vspace{3mm}
{\footnotesize
  \centering
 \hspace{-3.2mm}\begin{tabular}{|c|c|c|c|c|c|c|c|c|c|c|c|c|}
    \hline \multicolumn{2}{|c|}{Problem number}  &  3     &  4     &  7     &  15    &  25  &  29
      &    31     &  38     &  39     &  40     &  42        \\

      \hline \multicolumn{2}{|c|}{$l$} & 6 &  7 &  5  &  4  &  4  & 6  &  6 & 4  &  4 & 4  & 5  \\
      
      \hline\hspace{-0.5mm}$||\widehat{H}^{U,V}_S - H_{\widehat{p},\eta}^{U,V}||_2$\hspace{-1mm}& & 0.0065 &
      0.0031 & 0.0071 & 0.0042 & 0.0033  & 0.0051 & 0.0072 & 0.0061 & 0.0065 &0.0047  & 0.0060 \\

 Eq.~\eqref{eq:1} &\hspace{-0.9mm}$\eta=\frac{1}{e}$\hspace{-0.7mm} & 0.9134 &  0.4107 &  0.9196 & 0.9466 &  0.9152 &
      0.9096 &  0.9219 & 0.8765 & 0.8292 & 0.8796 & 0.8565 \\

\textbf{Th.~\ref{th:hankelFactor}
        (dim. free)} & & 0.0985 &  0.0374 &  0.0858 & 0.1292 &
      0.1139 & 0.0849 & 0.0895 &0.0979 & 0.1033 & 0.1092 & 0.0934   \\
      
\textbf{Th.~\ref{th:hankelFactor}
        (opt. $\boldmath{U,V}$)} & & 0.0601 &  0.0300 &  0.0619 & 0.0364 &0.0412 & 0.0559
      & 0.0589 &  0.0405 &  0.0356 & 0.0349 &  0.0444 \\
      \hline 
    \end{tabular}
} \vspace{-1mm}
\label{fig:table_factor}
\end{table*}

We can remark that our dimension-free bounds are significantly more
accurate than the one provided by Equation~\eqref{eq:1}. Notice that in the prefix
case, the dimension-free bound has a better behavior in the limit
case $\eta=1$ than the bound
from Eq.~\eqref{eq:1}. This is due to the fact that in
our bound, the term that bounds $||Z||_2$ appears in the $\frac{1}{N}$ term while
it appears in the $\frac{1}{\sqrt{N}}$ term in the other one.

\ifthenelse{\boolean{vl}}{\paragraph{Implication for learning}

These results show that the concentration of the empirical Hankel
matrix around its mean does not highly depend on its dimension and
they suggest that as far as computational resources permit it, the
size of the matrices should not be artificially restricted in spectral
algorithms for learning HMMs or rational stochastic languages. 

To illustrate this claim, we have performed additional experiments by
considering matrices with 3,000 columns and a variable number of rows,
from 70 to 3,000.

For each problem and each set of rows and columns, we have computed
the $r$ first right singular vectors $R$ of $H^{U,V}$ (resp. $R_S$ of
$H_S^{U,V}$), where $r$ is the rank of the target, and the distance
between the linear spaces spanned by $R$ and $R_S$. Most classical
distances are based on the principal angles $\theta_1\geq \theta_2
\geq \ldots \geq \theta_r$ between the spaces $span(R)$ and
$span(R_S)$. The largest principal angle $\theta_1$ is a harsh measure
since, even if the two spaces coincide along the last $r-1$ principal
angles, the distance between the two spaces can be large. We have
considered the following measure
\begin{equation}
d(span(R),span(R_S))=1-\frac{1}{r}\sum_{i=1}^r\cos{\theta_i}\label{eq:3}
\end{equation}
which is equal to 0 if the
spaces coincide and 1 if they are completely orthogonal, and which takes into account
all the principal angles.

The table~\ref{tab:angles1} shows the sum $\sum_{i=1}^r\cos{\theta_i}$
for each problem. The table~\ref{tab:angles2} displays the same
information but each measure is normalised by using
formula~\ref{eq:3}.

These tables show that for all problems but two, the spaces spanned by
the right singular vectors are the closest for the maximal size
Hankel matrix. They also show that these spaces remain quite distant
for 6 problems over 11. For 4 problems, the spaces are already close to each
other even for small matrices - but it can be noticed that widening
the matrix do not deteriorate the results. 

\begin{table*}[t]
\vspace{-1mm}
  \caption{Sum of the cosinus of the principal angles for each problem using matrices of dimension $|U|\times 3000$.}
\vspace{3mm}
  \centering
\begin{tabular}{|c||c|c|c|c|c|c|c|c|c|c|c|}
\hline
$|U|$&3&4&7&15&25&29&31&38&39&40&42\\
\hline
70&17.67&9.97&10.98&19.08&5.63&23.94&10.89&6.46&5.16&39.31&5.91\\
100&17.97&9.97&10.98&19.18&6.59&26.80&10.93&6.75&5.19&40.31&\textbf{5.95}\\
200&18.31&\textbf{9.98}&\textbf{11.99}&20.17&6.93&26.99&11.12&7.13&\textbf{5.82}&42.14&\textbf{5.95}\\
500&18.76&\textbf{9.98}&\textbf{11.99}&21.47&7.13&27.49&11.13&7.94&5.63&43.54&\textbf{5.95}\\
1000&18.82&\textbf{9.98}&\textbf{11.99}&21.53&7.82&27.88&11.16&8.38&5.60&44.94&\textbf{5.95}\\
2000&19.02&\textbf{9.98}&\textbf{11.99}&21.76&\textbf{7.98}&27.86&11.19&8.38&5.52&45.22&\textbf{5.95}\\
3000&\textbf{19.07}&\textbf{9.98}&\textbf{11.99}&\textbf{21.79}&7.61&\textbf{27.89}&\textbf{11.20}&\textbf{8.47}&5.48&\textbf{45.68}&\textbf{5.95}\\
\hline
rank&25&10&12&26&14&36&12&13&6&65&6\\
\hline
\end{tabular}

  \label{tab:angles1}
\end{table*}

\begin{table*}[t]
\vspace{-1mm}
  \caption{Normalized distance between the principal angles for each problem using matrices of dimension $|U|\times 3000$.}
\vspace{3mm}
  \centering
\begin{tabular}{|c||c|c|c|c|c|c|c|c|c|c|c|}
\hline
$|U|$&3&4&7&15&25&29&31&38&39&40&42\\
\hline
70&0,293&0,003&0,085&0,266&0,598&0,335&0,093&0,503&0,140&0,395&0,015\\
100&0,281&0,003&0,085&0,262&0,529&0,256&0,089&0,481&0,135&0,380&0,008\\
200&0,268&\textbf{0,002}&\textbf{0,001}&0,224&0,505&0,250&0,073&0,452&\textbf{0,030}&0,352&0,008\\
500&0,250&\textbf{0,002}&\textbf{0,001}&0,174&0,491&0,236&0,072&0,389&0,062&0,330&0,008\\
1000&0,247&\textbf{0,002}&\textbf{0,001}&0,172&0,441&0,226&0,070&0,355&0,067&0,309&0,008\\
2000&0,239&\textbf{0,002}&\textbf{0,001}&0,163&\textbf{0,430}&0,226&0,068&0,355&0,080&0,304&0,008\\
3000&\textbf{0,237}&\textbf{0,002}&\textbf{0,001}&\textbf{0,162}&0,456&\textbf{0,225}&\textbf{0,067}&\textbf{0,348}&0,087&\textbf{0,297}&\textbf{0,008}\\
\hline
rank&25&10&12&26&14&36&12&13&6&65&6\\
\hline
\end{tabular}

  \label{tab:angles2}
\end{table*}


}{}


\section{Conclusion}
\label{s:conclu}
We have provided dimension-free concentration inequalities for Hankel
matrices in the context of spectral learning of rational stochastic languages. 
These bounds cover 3 cases, each one corresponding to a specific way to exploit
the strings under observation, paying attention to the strings
themselves, to their prefixes or to their factors. For the last two
cases, we introduced parametrized variants which allow a trade-off
between the rate of the concentration and the exploitation of the
information contained in data.

A consequence of these results is that there is no a priori good
reason, aside from computing resources limitations, to restrict the
size of the Hankel matrices. This suggests an immediate future work
consisting in investigating recent random techniques \cite{Halko:2011}
to compute singular values decomposition on Hankel matrices in order
to be able to deal with huge matrices. Then, a second aspect is to evaluate the impact of these
methods on the quality of the models, including an empirical
evaluation of the behavior of the standard approach and its prefix
and factor extensions, along with the influence of the parameter
$\eta$.  

Another research direction would be to link up the prefix and factor
cases to concentration bounds for sum of random tensors and to generalize the results to the
case where a fixed number $\geq 1$ of factors is considered for each string.

\section*{Acknowledments}
This work was supported by the French National Agency for Research (Lampada - ANR-09-EMER-007). 

\end{document}